\documentclass{article}

\usepackage{arxiv}

\usepackage[utf8]{inputenc} 
\usepackage[T1]{fontenc}    
\usepackage{hyperref}       
\usepackage{url}            
\usepackage{booktabs}       
\usepackage{amsfonts}       
\usepackage{nicefrac}       
\usepackage{microtype}      
\usepackage{xcolor}         
\usepackage{natbib}
\usepackage{amsmath}
\usepackage{amssymb}
\usepackage{amsthm}
\usepackage{graphicx}
\usepackage{multirow}
\usepackage{wrapfig}

\newtheorem{theorem}{Theorem}[section]
\newtheorem{proposition}[theorem]{Proposition}

\title{Statistical Matching via Schr\"odinger Bridge \\ beyond Conditional Independence}
\author{%
  Eunho Koo\\
  Department of Big Data Convergence \\
  Chonnam National University \\
  \And
    Tongseok Lim
    \\
  Mitch Daniels School of Business \\
  Purdue University \\
   \And
  Jinwon Sohn\thanks{Authors are listed in alphabetical order and contributed equally. This work was conducted primarily while Jinwon Sohn was a Ph.D. student at Purdue University.} 
 \\
  Booth School of Business \\
  University of Chicago \\
  \And
}

\newcommand{\be}{\begin{equation}}
\newcommand{\ee}{\end{equation}}
\newcommand{\bea}{\begin{eqnarray}}
\newcommand{\eea}{\end{eqnarray}}
\newcommand{\barr}{\begin{array}}
\newcommand{\earr}{\end{array}}
\newcommand{\bn}{\begin{enumerate}}
\newcommand{\en}{\end{enumerate}}
\newcommand{\bi}{\begin{itemize}}
\newcommand{\ei}{\end{itemize}}
\newcommand{\bbbm}{\begin{pmatrix}}
\newcommand{\eeem}{\end{pmatrix}}

\newcommand{\nn}{\nonumber}

  \definecolor{darkspringgreen}{rgb}{0.09, 0.45, 0.27} 
 \definecolor{darkgray}{rgb}{0.66, 0.66, 0.66}

\usepackage{doi}

\renewcommand{\shorttitle}{Statistical Matching via Schr\"odinger Bridge}
\hypersetup{
pdftitle={Statistical Matching via Schr\"odinger Bridge beyond Conditional Independence},
pdfauthor={Eunho Koo, Jinwon Sohn, Tongseok Lim},
pdfsubject={stat.ML, stat.ME},
pdfkeywords={Statistical Matching, Schr\"odinger Bridge, Conditional Independence},
}

\begin{document}

\maketitle


\begin{abstract}
Statistical matching combines partially overlapping datasets that share covariates $X$ but observe the target $Y$ and auxiliary variables $Z$ separately. Classical approaches typically invoke the conditional independence assumption (CIA), which makes the problem identifiable but fundamentally implies that the imported auxiliary variable provides no additional predictive power for $Y$ once $X$ is known. To capture this latent $Y$--$Z$ dependence, we propose a novel dependency-aware Schr\"odinger bridge for predictive statistical matching. Our approach couples the two separated databases by tilting the conservative CIA baseline with a transportation-based compatibility cost, recovering an informative joint distribution. The resulting statistical learning framework yields full probabilistic posterior rules for bidirectional imputation. Theoretically, we establish a sufficient condition under which the learned bridge strictly improves over the CIA baseline, alongside an exact joint recovery guarantee in the Gaussian setting under an appropriate cost. Across synthetic benchmarks and real-world datasets (CelebA and Adult), we demonstrate that our dependency-aware completion consistently improves downstream predictive utility, proving especially beneficial in settings like data recoding where the underlying population exhibits strong $Y$--$Z$ dependence.
\end{abstract}


\section{Introduction}
Integrating disparate databases is a ubiquitous technique across many real-world applications. For instance, \citet{lee:etal:25} demonstrate that merging supermarket loyalty data with credit card records reveals grocery shopping habits as a powerful proxy for creditworthiness, significantly improving risk predictions for consumers lacking traditional credit scores. Beyond finance, data integration is pivotal in survey studies \citep{jaus:till:23}, data recoding \citep{gare:omer:22}, combining multi-assay genomics \citep{cai:etal:16}, multimodal learning \citep{li:etal:23}, and socio-economic analyses of income and poverty \citep{sera:tonk:17}.

Statistical matching (SM) becomes essential when these files must be merged but lack unique identification keys, often due to strict privacy constraints. Formally, let $P_{X,Y}$ and $P_{X,Z}$ be the observed marginals of an unknown joint population $P_{X,Y,Z}$ over a random vector $(X,Y,Z)\in \mathcal{X}\times \mathcal{Y}\times \mathcal{Z}$. We consider a recipient database $D^A = \{(X_i^A, Y_i^A)\}_{i=1}^{n_A} \sim P_{X,Y}$ and a donor database $D^B = \{(X_j^B, Z_j^B)\}_{j=1}^{n_B} \sim P_{X,Z}$. Here, the covariates $X$ are shared, but $Y$ is exclusive to $D^A$ and $Z$ is exclusive to $D^B$. 
With a slight abuse of notation, we write $D^A = (X^A, Y^A)$ and $D^B = (X^B, Z^B)$. The goal of SM is to impute the missing auxiliary variables $\hat Z^A$ to construct a synthetic joint dataset $\hat D^A = (X^A, Y^A, \hat Z^A)$ that faithfully approximates the true joint law $P_{X,Y,Z}$. Symmetrically, $D^A$ can act as a donor to complete $\hat D^B=(X^B,\hat Y^B,Z^B)$, yielding a fully merged pooled dataset $\hat D^T = \hat D^A \cup \hat D^B$.

To make this joint-recovery problem tractable, classical SM \citep{rubi:86,aluj:etal:07,endr:augu:16,anno:etal:24} typically relies on the conditional independence assumption (CIA), $Y\perp Z \mid X$, to transfer $Z^B$ to the recipient file. While CIA converts an ill-posed identification problem into a feasible imputation task, it is fundamentally restrictive for downstream applications. Under CIA, $P_{Y|X,Z} = P_{Y|X}$, meaning that the imported variable $\hat{Z}^A$ provides absolutely no additional predictive power for $Y$ once $X$ is known. Consequently, standard CIA-based integration is superfluous if the goal is to capture meaningful $Y$--$Z$ interactions. Dependency-aware data integration therefore requires a principled mechanism to impose structural conditions that recover plausible, informative associations among $(X,Y,Z)$ beyond mere conditional independence.

This work introduces a novel perspective for predictive SM beyond CIA by leveraging the Schr\"odinger bridge mechanism \citep{leon:13,chen:etal:21}. Rooted in entropic optimal transport, we formulate a bridge problem that seeks a joint distribution $Q^{\star}$ whose marginals exactly match the observed $P_{X,Y}$ and $P_{X,Z}$. The synthesized dependency of $Q^{\star}$ is anchored to the conservative CIA product coupling $P_\rho=P_XP_{Y\mid X}P_{Z\mid X}$, but is exponentially tilted by a compatibility cost to recover the latent association between $D^A$ and $D^B$ that minimizes distributional discrepancy. Finally, the learned posterior $Q^\star_{Z \mid X,Y}$ serves as a generative model for imputing $\hat Z^A$ conditionally on $D^A$. Because this approach yields a full probabilistic posterior rather than deterministic point matches, it inherently supports stochastic completion and multiple imputation \citep{rubi:87} for principled uncertainty quantification. Figure~\ref{fig:overview1} depicts the overall pipeline of our methodology.
\begin{figure}[t]
    \centering
    \includegraphics[width=\textwidth]{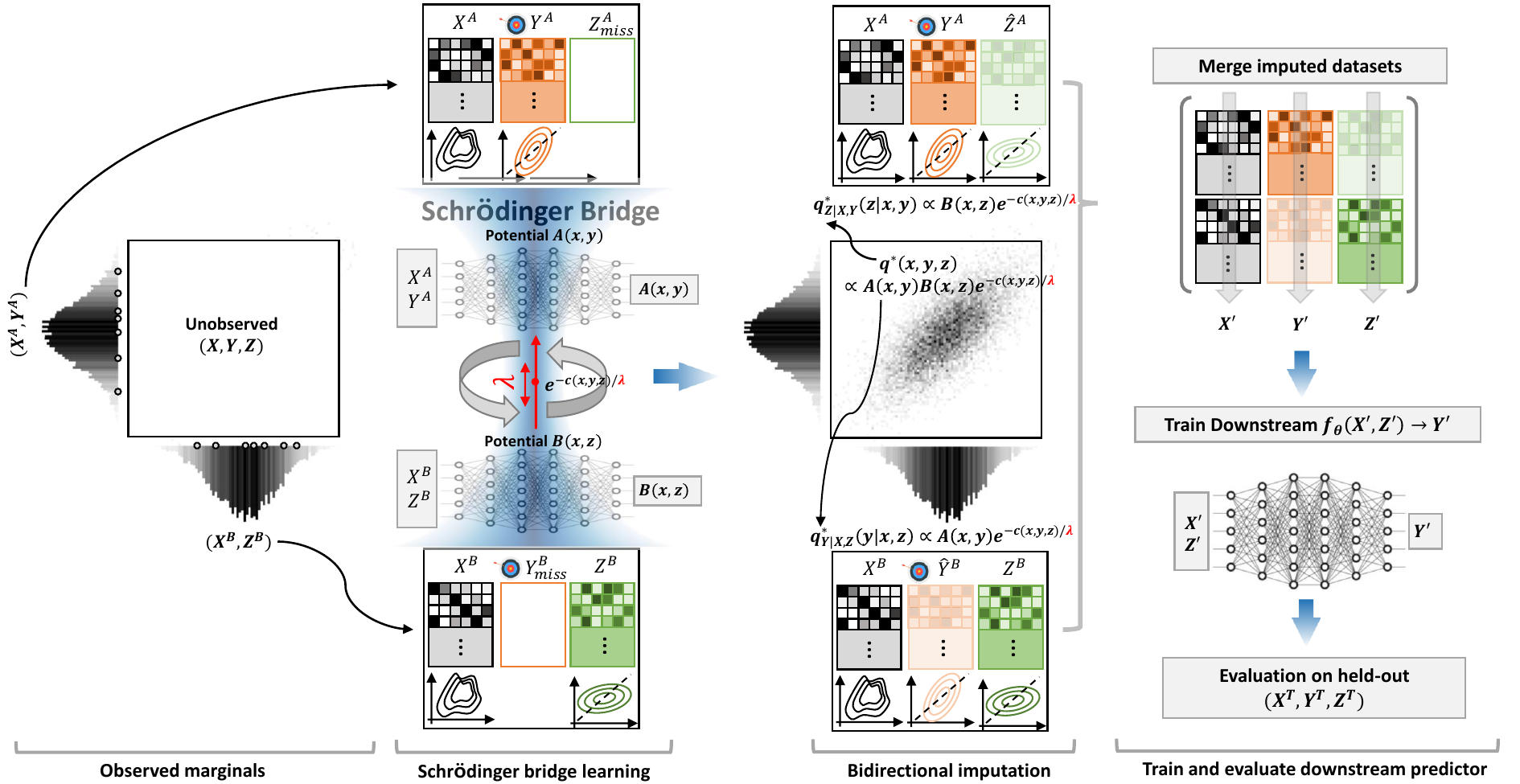}
      \caption{\textbf{Overview of the proposed predictive statistical matching pipeline.} The framework first learns the Schr\"odinger bridge potential functions to link the observed datasets $D^A$ and $D^B$. It then performs bidirectional posterior completion to construct the imputed datasets $\hat D^A$ and $\hat D^B$, which are subsequently merged to train and evaluate the downstream prediction model.}
    \label{fig:overview1}
\end{figure}

\paragraph{Contributions.}
Our contributions are fourfold. First, to our knowledge, this is the first work to cast predictive statistical matching as a dependency-aware Schr\"odinger bridge, providing a principled mechanism to relax the restrictive conditional independence assumption (CIA). Second, we establish a theoretical sufficient condition under which the proposed bridge strictly improves recipient-side posterior completion over the CIA baseline, and we prove that the optimal bridge $Q^\star$ exactly recovers the full joint law under Gaussianity for an appropriate choice of the regularization parameter. Third, we develop a scalable algorithmic framework to estimate the bridge, transforming the potential equations into an efficient statistical risk minimization problem parameterized by neural networks. Finally, across Gaussian, nonlinear synthetic, and real-world benchmark datasets, we empirically demonstrate that our learned couplings can consistently improve downstream predictive utility compared to classical mixed methods and a recent state-of-the-art optimal transport baseline, when a small auxiliary information about the population is available.


\section{Schr\"odinger Bridge for Statistical Matching}
\label{sec:method}

\subsection{Local and global bridge formulations}For any $x\in \mathcal{X}$, let $\rho_x := P_{Y \mid X=x} \otimes P_{Z \mid X=x}$ denote the independent product coupling induced by the CIA. To reconstruct a conditional distribution that captures the latent $Y$--$Z$ dependence beyond $\rho_x$, we propose solving an entropically regularized optimal transport (EOT) problem for each $x \in \mathcal{X}$:
\begin{equation}\label{eq:local_bridge}
\pi_x^\star\in\arg\min_{\pi_x \in \Pi(P_{Y\mid X=x},\, P_{Z\mid X=x})}\left\{\mathbb E_{\pi_x}[c(x,Y,Z)]+\lambda\, \mathrm{KL}(\pi_x \|\rho_x)\right\},
\end{equation}
where $\mathrm{KL}(\mu\|\nu) = \int \log(d\mu/d\nu)d\mu$ is the KL-divergence, and $\Pi(\mu,\nu)$ is the set of joint laws with given marginals $\mu$ and $\nu$. The function $c:\mathcal X\times\mathcal Y\times\mathcal Z\to[0,\infty)$ acts as a compatibility cost, where smaller values indicate stronger latent association between $x$, $y$, and $z$. This objective naturally balances two forces: the first term favors couplings that place mass on compatible $(y,z)$ pairs at the given covariate value $x$, while the second penalizes deviation from the conservative CIA baseline $\rho_x$. Thus, a small $\lambda$ emphasizes compatibility, whereas a large $\lambda$ keeps the solution close to independence. 

To interpret \eqref{eq:local_bridge} from a Schr\"odinger bridge perspective, we define the tilted reference distribution
$dR_x(y,z)
=
\alpha(x)^{-1}
\exp\!\left\{-c(x,y,z)/ \lambda\right\} d\rho_x(y,z),$ 
where $\alpha(x)$ is the normalizing constant. Through this exponential tilt, $R_x$ is no longer a product measure; it encodes conditional dependence between $Y$ and $Z$ while remaining anchored to $\rho_x$. For any coupling $\pi_x\in\Pi(P_{Y\mid X=x},P_{Z\mid X=x})$, 
a standard calculation gives
$\mathrm{KL}(\pi_x\|R_x)
=
\mathrm{KL}(\pi_x\|\rho_x)
+
\frac{1}{\lambda}\,\mathbb E_{\pi_x}[c(x,Y,Z)]
+
\log \alpha(x)$. Hence, minimizing $\mathrm{KL}(\pi_x\|R_x)$ over couplings with fixed marginals is equivalent to solving \eqref{eq:local_bridge}. We refer to \citet{leon:13} and \citet{nutz:21} for a broader background on Schr\"odinger bridges and EOT.

Solving \eqref{eq:local_bridge} for every $x$ is conceptually useful but computationally prohibitive when the covariate space $\mathcal{X}$ is large or continuous, which is common in real-world applications. To address this, we lift the local bridge into a single global formulation. Let $P_\rho(dx,dy,dz)=P_X(dx)P_{Y | X}(dy| 
 x)P_{Z | X}(dz | x)$ denote the full joint law under CIA, and define a single global reference measure:\begin{equation*}
dR(x,y,z)\propto\exp\left\{-c(x,y,z) / \lambda \right\}dP_\rho(x,y,z).
\end{equation*}

We then define the global Schr\"odinger bridge $Q^\star$ as the solution to:\begin{equation}\label{eq:global_bridge}
Q^\star\in\arg\min_{Q\in\Pi(P_{X,Y},\,P_{X,Z})}\mathrm{KL}(Q|R),
\end{equation}
where $\Pi(P_{X,Y},P_{X,Z})$ is the set of joint laws on $(X,Y,Z)$ whose $(X,Y)$- and $(X,Z)$-marginals match the observed data. Because any candidate $Q$ and the reference $R$ share the same marginal $P_X$, a standard decomposition of the KL divergence ensures the local and global formulations are equivalent: $Q^\star$ solves \eqref{eq:global_bridge} if and only if its conditional kernel $Q^\star_{Y,Z\mid X=x}$ solves \eqref{eq:local_bridge} for $P_X$-a.s.. Thus, the global formulation \eqref{eq:global_bridge} provides a single, scalable, and tractable objective for practical implementation.

\subsubsection{Structural comparison to existing SOTA model}
\label{sec:comp-ot}
Our Schr\"odinger bridge formulation recovers, in the unregularized limit, the recent optimal transport approach of \citet{gare:omer:22}, hereafter \textsf{GO}. To synthesize the association between $D^A$ and $D^B$ on a finite support, \textsf{GO} constructs a joint coupling $\gamma^{\star} \in \arg\min_{\gamma\in \Pi(P_{X,Y},P_{X,Z})} \int c(y,z)\, d\gamma$, where the cost $c(y,z) := \mathbb{E}[d(X^A, X^B)\mid Y^A=y, Z^B=z]$ reflects the expected covariate discrepancy. This strategy aligns the datasets by minimizing the prespecified matching loss. Conceptually, our local bridge in \eqref{eq:local_bridge} reduces to \textsf{GO}'s unregularized transport problem as $\lambda \to 0$. However, our entropically regularized formulation offers several distinct structural and computational advantages:

\quad\raisebox{0.33ex}{\tiny$\bullet$} \textbf{Strict Convexity and Stability:} \textsf{GO} requires solving a linear program, which can suffer from instability and non-unique solutions. In contrast, the KL-penalty in \eqref{eq:local_bridge} makes our optimization strictly convex. As established in the EOT literature, this regularization dramatically improves both statistical stability and computational efficiency \citep{cutu:13, peyr:cutu:19}.

\quad\raisebox{0.33ex}{\tiny$\bullet$} \textbf{Universal support:} The standard \textsf{GO} formulation is strictly tailored to discrete settings and relies on tabular linear programming. Because our bridge is formulated through continuous reference measures, it naturally accommodates both continuous and discrete (one-hot encoded) variables, making it highly scalable to complex real-world predictive pipelines.

While the idea of optimal transport in statistical matching has been explored previously \citep{kade:01, jaus:till:23, gare:omer:22}, to our knowledge, this work represents the first attempt to cast statistical matching as a dependency-aware Schr\"odinger bridge.

\subsection{Posterior completion}
\label{sec:posterior_completion}

Assuming the relevant distributions admit densities with respect to an underlying base measure, let $q^\star$, $p_{X,Y}$, and $p_{X,Z}$ denote the densities of $Q^\star$, $P_{X,Y}$, and $P_{X,Z}$, respectively. The solution to the global formulation in \eqref{eq:global_bridge} admits a unique structural representation characterized by two potential functions.
\begin{proposition}\label{prop:global_bridge_form}
The density $q^{\star}$ of the solution $Q^{\star}$ to \eqref{eq:global_bridge} factors as
\begin{equation}\label{eqn:global_joint_sol}q^\star(x,y,z) = A(x,y) B(x,z) e^{-c(x,y,z)/\lambda},
\end{equation}
where the positive potentials $A(x,y)$ and $B(x,z)$ are determined by the system of equations:
\begin{equation}\label{system}
A(x,y)\int B(x,z)e^{-c(x,y,z)/\lambda}dz = p_{X,Y}(x,y), \ \ B(x,z)\int A(x,y)e^{-c(x,y,z)/\lambda}dy = p_{X,Z}(x,z).
\end{equation}
Consequently, the posterior conditional densities for completion are given by
\begin{equation}\label{eqn:global_sol}
q^{\star}_{Z\mid X,Y}(z\mid x,y)\propto B(x,z)e^{-c(x,y,z)/\lambda}, \quad q^{\star}_{Y\mid X,Z}(y\mid x,z)\propto A(x,y)e^{-c(x,y,z)/\lambda}.
\end{equation}
\end{proposition}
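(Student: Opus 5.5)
The plan is to reduce the global problem \eqref{eq:global_bridge} to the local problems \eqref{eq:local_bridge}, using the equivalence stated just above. For each $x$, I will then invoke the classical structure theorem for entropic optimal transport / static Schr\"odinger bridges (see \citet{nutz:21}, \citet{leon:13}).

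\textbf{Step 1: local factorization.} Fix $x$. The problem $\min_{\pi_x\in\Pi(P_{Y|X=x},P_{Z|X=x})}\mathrm{KL}(\pi_x\|R_x)$ has a unique solution whenever its value is finite. Existence follows from lower semicontinuity of KL and tightness of $\Pi$; uniqueness follows from strict convexity. The standard characterization then says that $d\pi_x^\star/d\rho_x = f_x(y)g_x(z)e^{-c(x,y,z)/\lambda}$ for measurable $f_x,g_x\ge 0$. To derive this I would use a first-order (Lagrangian) argument. Perturb $\pi^\star_x$ along $\pi_x^\star+t(\pi-\pi_x^\star)$ for $\pi\in\Pi$. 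The optimality condition shows that $\log(d\pi_x^\star/dR_x)$ is orthogonal to all signed measures with zero marginals. Hence it lies in the closure of $\{\phi(y)+\psi(z)\}$, and the Csisz\'ar / R\"uschendorf--Thomsen theorem upgrades this to an exact sum decomposition $\rho_x$-a.e.

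\textbf{Step 2: densities.} Write $d\rho_x=p_{Y|X}(y|x)p_{Z|X}(z|x)\,dy\,dz$ and multiply by $p_X(x)$. This gives
$$q^\star(x,y,z)=\big[f_x(y)p_{X,Y}(x,y)\big]\big[g_x(z)p_{Z|X}(z|x)\big]e^{-c/\lambda}.$$
I therefore set $A(x,y)=f_x(y)p_{X,Y}(x,y)$ and $B(x,z)=g_x(z)p_{Z|X}(z|x)$, which proves \eqref{eqn:global_joint_sol}. Measurability in $x$ requires a measurable selection of the potentials. I would obtain it either by applying the same Csisz\'ar argument directly to the global problem with reference $R$, where the orthogonal complement consists of functions $a(x,y)+b(x,z)$, or by noting that the Schr\"odinger system solution depends measurably on $x$. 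The direct global argument is cleaner, so I would use it and only cite the local picture for intuition.

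\textbf{Step 3: the system and the posteriors.} Impose the marginal constraints $Q^\star\in\Pi(P_{X,Y},P_{X,Z})$. Integrating \eqref{eqn:global_joint_sol} over $z$ gives $p_{X,Y}$, and integrating over $y$ gives $p_{X,Z}$; this is exactly \eqref{system}. Positivity holds a.e. on the supports, since $q^\star>0$ wherever the marginals are positive, and off the supports we may redefine $A$ and $B$ to be positive. The posteriors follow from Bayes' rule. For fixed $(x,y)$, the factor $A(x,y)$ is a constant, so $q^\star_{Z|X,Y}(z|x,y)=q^\star/p_{X,Y}\propto B(x,z)e^{-c/\lambda}$. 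The normalizer is exactly $\int Be^{-c/\lambda}dz=p_{X,Y}/A$ by \eqref{system}. The $Y\mid X,Z$ case is symmetric.

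\textbf{Main obstacle.} The hard part is Step 1 in full generality: showing that the potentials exist as genuine measurable functions rather than just $L^1$-limits of sums. This needs a finiteness assumption, namely $\mathrm{KL}(P_\rho\|R)<\infty$, which holds for instance if $\int c\,dP_\rho<\infty$, so that the feasible set has finite value. I would state that assumption explicitly and cite the closedness theorem for sums (R\"uschendorf--Thomsen, or \citet{nutz:21}, Thm.~2.1) rather than reprove it.
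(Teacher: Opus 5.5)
Your proposal is correct. Its core is the same first-order idea as the paper's proof: optimality forces $\log(dQ^\star/dR)$ to be a sum $a(x,y)+b(x,z)$. Absorbing $p_Xp_{Y\mid X}$ into $A$ and $p_{Z\mid X}$ into $B$ then gives \eqref{eqn:global_joint_sol}, and \eqref{system} and \eqref{eqn:global_sol} follow by integration and Bayes' rule exactly as in your Step~3.

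What differs is how that first step is justified. The paper writes a Lagrangian with multipliers $f(x,y),g(x,z)$ and sets a formal variational derivative to zero. This is short and shows clearly that $A,B$ are exponentiated multipliers. However, it tacitly assumes three things: that the multipliers exist as genuine functions, that $q^\star>0$ so the logarithm is finite, and that the optimal value is finite. You replace it with Csisz\'ar's I-projection characterization plus a closedness theorem for sum spaces. You also make the needed hypothesis explicit: finite value, for which $\int c\,dP_\rho<\infty$ suffices because $P_\rho$ is feasible (and which the moment condition of Theorem~\ref{thm:improvement} implies). So your version is an actual proof in infinite dimensions rather than a heuristic derivation.

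Two points to tighten. First, the global argument you prefer needs closedness of the \emph{overlapping} sum space $\{a(x,y)+b(x,z)\}$, which is not literally the two-marginal R\"uschendorf--Thomsen theorem. Proving it fiberwise reintroduces measurability in $x$. This is routinely fixed with $h=\log(dQ^\star/dR)$: take $a(x,y)$ to be the lower median of $h(x,y,\cdot)$ under $P_{Z\mid X=x}$, and $b(x,z)$ the lower median of $h(x,\cdot,z)-a(x,\cdot)$ under $P_{Y\mid X=x}$; both are jointly measurable. Second, redefining $A,B$ to be positive off the supports would break \eqref{eqn:global_joint_sol} on $\{p_{X,Y}=0\}$ or $\{p_{X,Z}=0\}$ whenever these sets have positive base measure. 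So ``positive'' should be read as positive on the supports, which is also all the paper's construction delivers.
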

This representation directly yields a stochastic completion rule. For each recipient unit $i=1,\dots,n_A$, we impute the missing auxiliary variable by sampling from the learned posterior:
\begin{equation}\label{eq:posterior_z}
\hat Z_i^A \sim q^\star_{Z\mid X,Y}(\cdot \mid X_i^A,Y_i^A)  \propto B(X_i^A,z)e^{-c(X_i^A,Y_i^A,z)/\lambda},
\end{equation}
yielding the completed dataset $\hat D^A$. By drawing from a generative posterior rather than deterministic point matches, each sampling pass yields a distinct completed database. This stochasticity provides the functionality of multiple imputation \citep{rubi:87}, allowing downstream models to properly quantify matching uncertainty. The completion for $\hat D^B$ proceeds symmetrically using $q^\star_{Y\mid X,Z}$.

\subsection{A learning framework to find the potentials}\label{sec:learning_bridge}
We propose a scalable statistical learning framework to compute the bridge potentials. The central idea is to transform the recursive integral equations in Proposition~\ref{prop:global_bridge_form} into a statistical risk minimization problem. Let $A_{p}(x,y)=A(x,y)/p_{X,Y}(x,y)$. As the two observed datasets share the same marginal distribution $P_X$, the ratio of their joint densities simplifies to $d(x,y,z) := p_{X,Y}(x,y)/p_{X,Z}(x,z) = p_{Y\mid X}(y\mid x)/p_{Z\mid X}(z\mid x)$. We can thus rewrite the system \eqref{system} as
\begin{equation}\label{eqn:potential-dynamics}
A_p(x,y)\int B(x,z)e^{-c(x,y,z)/\lambda}dz=1, \ \  B(x,z)\int A_p(x,y)d(x,y,z)e^{-c(x,y,z)/\lambda}dy=1.
\end{equation}
We introduce an auxiliary proposal distribution $q$ (e.g., a simple base measure or empirical marginal) for $Y$ and $Z$. By importance weighting, the conditions in \eqref{eqn:potential-dynamics} are equivalent to:\begin{align*}A_p(x,y)\mathbb{E}_{Z\sim q}\left[\frac{B(x,Z)}{q(Z)}e^{-c(x,y,Z)/\lambda}\right]=1, \ \  B(x,z) \mathbb{E}_{Y\sim q}\left[\frac{A_p(x,Y)}{q(Y)}d(x,Y,z)e^{-c(x,Y,z)/\lambda}\right]=1.\end{align*}We parameterize $A_p$ and $B$ using sufficiently flexible neural networks $A_{p}(x,y;\theta)$ and $B(x,z;\phi)$. {To ensure identifiability, we constrain the potential $B$ to have unit norm, $\| B(\cdot;\phi) \|_{L^2(P_{X,Z})} = 1$, via normalization.} We can now define a statistical risk for learning the optimal parameters $\theta$ and $\phi$:
\begin{align}
\label{eqn:opt-loss}
&L(\theta,\phi) = \mathbb{E}_{P_{X,Y}}\big[\big\{A_{p}(X,Y;\theta)\mathbb{E}_{Z\sim q}\big[\big(B(X,Z;\phi)/q(Z)\big)e^{-c(X,Y,Z)/\lambda}\big]-1\big\}^2\big] \nonumber \\
&+ \mathbb{E}_{P_{X,Z}}\big[\big\{B(X,Z;\phi) \mathbb{E}_{Y\sim q}\big[\big(A_{p}(X,Y;\theta)/q(Y)\big)d(X,Y,Z)e^{-c(X,Y,Z)/\lambda}\big]-1\big\}^2\big].
\end{align}
By construction, $L(\theta,\phi)\geq 0$, and the optimal parameters $(\theta^*,\phi^*)$ that achieve $L(\theta^*,\phi^*)=0$ satisfy the system \eqref{eqn:potential-dynamics} almost surely. In practice, we minimize the empirical estimate of $L(\theta,\phi)$ using alternating gradient descent. {The outer expectations are approximated using mini-batches from $D^A$ and $D^B$, while the inner expectations are estimated via Monte Carlo sampling from $q$. The density ratio $d(x,y,z)$ can be readily evaluated by pre-fitting conditional density estimators using off-the-shelf methods. Finally, the learned potentials $A_p(x,y;\theta^*)$ and $B(x,z;\phi^*)$ are used to execute the posterior completion. For instance, if the support of $Z$ is finite (or empirically approximated by the donor pool $\hat {\mathcal{Z}} = \{Z_j \in D^B\}$), the imputation procedure in \eqref{eq:posterior_z} reduces to sampling from a categorical distribution where the probability of selecting $z_k \in \hat {\mathcal{Z}}$ is proportional to 
$B(x,z_k;\phi^*)e^{-c(x,y,z_k)/\lambda}.$
}

\section{Theoretical properties}
\label{sec:theory}
We now investigate when the bridge-based completion strictly improves upon the CIA baseline, and when it is capable of exact joint recovery. Recall that $P=P_{X,Y,Z}$ denotes the true joint law and $P_\rho=P_X P_{Y\mid X}P_{Z\mid X}$ denotes the product law induced by CIA.
\subsection{A sufficient condition for informative improvement}
For any candidate joint law $Q$, we quantify the recipient-side predictive gain over the CIA baseline by the expected difference in Kullback-Leibler divergences:
\begin{equation}\label{eq:delta_z}
\Delta_Z(Q) = \mathbb E_{P_{X,Y}} \big[ \mathrm{KL}(P_{Z\mid X,Y} \,|\, P_{Z\mid X}) - \mathrm{KL}(P_{Z\mid X,Y} \,|\, Q_{Z\mid X,Y}) \big]. \nn
\end{equation}
A strictly positive $\Delta_Z(Q) > 0$ indicates that $Q_{Z\mid X,Y}$ provides a more accurate posterior for recipient-side completion on average than the conservative baseline $P_{Z\mid X}$.
\begin{theorem}\label{thm:improvement}
Assume the cost satisfies $\mathbb E_{P_\rho}[c(X,Y,Z)^2] < \infty$ and $c \ge 0$. Define the constants
\[
\delta_c = (\mathbb E_{P_\rho} - \mathbb E_{P})[c(X,Y,Z)], \ \  C = \frac{1}{2}\mathbb E_{P_\rho}[c(X,Y,Z)^2], \ \ C_\lambda = \frac{1}{2}\left(\mathbb E_{P_\rho}[e^{-c(X,Y,Z)/\lambda}] - 1\right)^2.\]
Let $Q^\star$ be the solution to the global bridge in \eqref{eq:global_bridge}. Then, for every $\lambda>0$, the informative gain satisfies
\begin{equation}\label{eq:improvement_bound}
\Delta_Z(Q^\star) \ge \frac{\delta_c}{\lambda} - \frac{C}{\lambda^2} + C_\lambda.\end{equation}
\end{theorem}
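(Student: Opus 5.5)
The plan is to rewrite $\Delta_Z(Q^\star)$ as an expectation of the log-density ratio $\log(dQ^\star/dP_\rho)$ under $P$. We then use the factorization of Proposition~\ref{prop:global_bridge_form} to move that expectation from $P$ to $Q^\star$. Next we bound the resulting variational value from below by the unconstrained Gibbs value. Finally, elementary inequalities for $-\log$ and $e^{-t}$ produce the three terms.

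\textbf{Step 1 (rewriting the gain).} Since $p_\rho(x,y,z)=p_{X,Y}(x,y)\,p_{Z\mid X}(z\mid x)$ and $q^\star_{Z\mid X,Y}=q^\star/p_{X,Y}$ (because $Q^\star\in\Pi(P_{X,Y},P_{X,Z})$), the two KL terms in $\Delta_Z$ combine to
\[
\Delta_Z(Q^\star)=\mathbb E_{P}\big[\log q^\star_{Z\mid X,Y}-\log p_{Z\mid X}\big]=\mathbb E_P\Big[\log \tfrac{dQ^\star}{dP_\rho}(X,Y,Z)\Big].
\]

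\textbf{Step 2 (transfer via the potentials).} By \eqref{eqn:global_joint_sol}, $\log(dQ^\star/dP_\rho)=a(x,y)+b(x,z)-c(x,y,z)/\lambda$. Here $a=\log A-\log p_{X,Y}$, and $b=\log B-\log p_{Z\mid X}$ absorbs the remaining density factor. Both $P$ and $Q^\star$ lie in $\Pi(P_{X,Y},P_{X,Z})$, so $\mathbb E_P[a+b]=\mathbb E_{Q^\star}[a+b]$. Hence
\[
\Delta_Z(Q^\star)=\mathrm{KL}(Q^\star\|P_\rho)+\tfrac1\lambda\mathbb E_{Q^\star}[c]-\tfrac1\lambda\mathbb E_P[c].
\]
Technically, this step requires justifying integrability of $a$ and $b$ under both laws. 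I would handle this by truncation or by invoking the standard Schr\"odinger-system theory \citep{nutz:21}. I expect this to be the main obstacle, together with the finiteness of $\mathbb E_P[c]$, which I would take to be implicit in $\delta_c$ being well defined.

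\textbf{Step 3 (Gibbs lower bound).} By the Donsker--Varadhan variational formula, any law $Q$ satisfies $\mathrm{KL}(Q\|P_\rho)+\frac1\lambda\mathbb E_Q[c]\ge -\log m$, where $m:=\mathbb E_{P_\rho}[e^{-c/\lambda}]\in(0,1]$ (using $c\ge0$). Dropping the marginal constraints only lowers the minimum, so this applies to $Q^\star$. Therefore $\Delta_Z(Q^\star)\ge -\log m-\frac1\lambda\mathbb E_P[c]$.

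\textbf{Step 4 (elementary inequalities).} For $m\in(0,1]$ we have $-\log m\ge (1-m)+\frac12(1-m)^2$, and the quadratic term equals $C_\lambda$. For $t\ge0$, $1-e^{-t}\ge t-t^2/2$, so
\[
1-m\ge \tfrac1\lambda\mathbb E_{P_\rho}[c]-\tfrac{1}{2\lambda^2}\mathbb E_{P_\rho}[c^2],
\]
which is finite by the moment assumption. Combining these gives
\[
\Delta_Z(Q^\star)\ge \tfrac1\lambda\big(\mathbb E_{P_\rho}[c]-\mathbb E_P[c]\big)-\tfrac{C}{\lambda^2}+C_\lambda,
\]
which is \eqref{eq:improvement_bound}.
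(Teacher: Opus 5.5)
Your proposal is correct and follows the paper's proof. The paper reaches the same intermediate bound $\Delta_Z(Q^\star)\ge -\log \mathbb E_{P_\rho}[e^{-c/\lambda}]-\frac1\lambda\mathbb E_P[c]$ by citing the generalized Pythagorean inequality $\mathrm{KL}(P\|Q^\star)\le\mathrm{KL}(P\|R)$ for the KL-projection, whereas your Steps 2--3 rederive it from the factorization in Proposition~\ref{prop:global_bridge_form}, as an identity with slack $\mathrm{KL}(Q^\star\|R)\ge 0$; both arguments then finish with the same inequalities for $-\log u$ and $e^{-v}$, and citing the projection inequality, as the paper does, avoids the integrability of the potentials $a,b$ that you flag as your main obstacle.
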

Theorem~\ref{thm:improvement} identifies the expected cost difference $\delta_c$ as the primary driver of improvement. Specifically, if $\delta_c > 0$---meaning the cost successfully assigns smaller penalties to $(x,y,z)$ triples that are more probable under the true dependence than under the independent baseline---the leading term $\delta_c/\lambda$ dominates, guaranteeing a strictly positive predictive gain $\Delta_Z(Q^\star) > 0$ for appropriately large $\lambda$. By symmetry, an identical conclusion holds for donor-side completion via $Q^\star_{Y\mid X,Z}$.

\subsection{Analysis of full joint recovery}
\label{sec:joint-recovery}
We now analyze the conditions for exact joint recovery, focusing on the case where $P_{X,Y,Z}$ is a multivariate Gaussian distribution with an $x$-independent compatibility cost, i.e., $c(x,y,z)=c(y,z)$.

\begin{theorem}\label{thm:gaussian_recovery}
Suppose $X\in\mathbb R^d$, $Y\in\mathbb R$, and $Z\in\mathbb R$ are jointly Gaussian with law $P=\mathcal N(m,\Sigma)$. Let $\sigma_{YZ\mid X}=\Sigma_{YZ}-\Sigma_{YX}\Sigma_{XX}^{-1}\Sigma_{XZ}$ denote the true conditional covariance between $Y$ and $Z$ given $X$. Consider the cost functions
\begin{equation}\label{eq:gaussian_cost}
c_{\mathrm{align}}(y,z)=\left|\mathbb E[X\mid Y=y]-\mathbb E[X\mid Z=z]\right|^2, \quad \text{and} \quad c_{\mathrm{oracle}}(y,z) = - \mathrm{sgn}(\sigma_{YZ|X}) y z.
\end{equation}
If $c = c_{\mathrm{oracle}}$, or if $c = c_{\mathrm{align}}$ and $\sigma_{YZ\mid X}\,\Sigma_{YX}\Sigma_{XZ}>0$, then there exists a regularization parameter $\lambda^\star >0$ such that the global bridge in \eqref{eq:global_bridge} achieves exact recovery, $Q^\star=P$.
\end{theorem}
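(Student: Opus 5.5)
The approach is to use the potential characterization of Proposition~\ref{prop:global_bridge_form}: any density of the form $A(x,y)B(x,z)e^{-c(y,z)/\lambda}$ with the correct $(X,Y)$- and $(X,Z)$-marginals is the (unique, by strict convexity of $\mathrm{KL}(\cdot\|R)$ on the convex set $\Pi(P_{X,Y},P_{X,Z})$) solution of \eqref{eq:global_bridge}. Since $P$ itself lies in $\Pi(P_{X,Y},P_{X,Z})$, it suffices to show that for a suitable $\lambda^\star$ the true Gaussian density $p(x,y,z)$ admits such a factorization. Equivalently, $\log p(x,y,z)+c(y,z)/\lambda^\star$ should split as a function of $(x,y)$ plus a function of $(x,z)$. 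We also need $\mathrm{KL}(P\|R)<\infty$, which is routine for a Gaussian $P$ and a quadratic cost with integrable tilt.

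\textbf{Step 1 (structure of the Gaussian log-density).} Write $\log p = -\tfrac12 w^\top \Sigma^{-1} w + \text{const}$ with $w=(x,y,z)-m$ and precision $K=\Sigma^{-1}$. Every monomial of this quadratic involves at most two of the blocks $x,y,z$. The only term that mixes $y$ and $z$ without $x$ is $-K_{YZ}(y-m_Y)(z-m_Z)$. All other cross terms, including the $x$--$y$ and $x$--$z$ interactions and the affine terms, are absorbed into $A(x,y)$ or $B(x,z)$. Hence $p$ has the required form if and only if $c(y,z)/\lambda^\star$ equals $-K_{YZ}\,yz$ plus functions of $y$ alone and of $z$ alone; the constant shifts from $m$ go into the potentials.

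By the Schur complement formula, the $(Y,Z)$ block of $K$ is the inverse of the conditional covariance $\Sigma_{(YZ)\mid X}$. Therefore $K_{YZ}=-\sigma_{YZ\mid X}/\det\Sigma_{(YZ)\mid X}$, so $\mathrm{sgn}(-K_{YZ})=\mathrm{sgn}(\sigma_{YZ\mid X})$.

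\textbf{Step 2 (oracle cost).} With $c_{\mathrm{oracle}}=-\mathrm{sgn}(\sigma_{YZ\mid X})yz$, we need $-\mathrm{sgn}(\sigma_{YZ\mid X})/\lambda = K_{YZ}$. This gives
\[
\lambda^\star=\det\Sigma_{(YZ)\mid X}/|\sigma_{YZ\mid X}|>0,
\]
which is valid when $\sigma_{YZ\mid X}\neq0$. If $\sigma_{YZ\mid X}=0$, then $\mathrm{sgn}=0$, the cost vanishes, and $Q^\star=P_\rho=P$ for any $\lambda$.

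\textbf{Step 3 (alignment cost).} Since $\mathbb E[X\mid Y=y]=\mu_X+\Sigma_{XY}\Sigma_{YY}^{-1}(y-m_Y)$ and similarly for $Z$, expanding the square gives $c_{\mathrm{align}}$ as a function of $y$ alone, plus a function of $z$ alone, plus the cross term
\[
-2\,\Sigma_{YX}\Sigma_{XZ}\,(y-m_Y)(z-m_Z)/(\Sigma_{YY}\Sigma_{ZZ}).
\]
Matching coefficients requires $-2\Sigma_{YX}\Sigma_{XZ}/(\lambda\Sigma_{YY}\Sigma_{ZZ}) = K_{YZ}$, that is,
\[
\lambda^\star = \frac{2\,\Sigma_{YX}\Sigma_{XZ}\,\det\Sigma_{(YZ)\mid X}}{\Sigma_{YY}\Sigma_{ZZ}\,\sigma_{YZ\mid X}},
\]
which is positive exactly under the hypothesis $\sigma_{YZ\mid X}\Sigma_{YX}\Sigma_{XZ}>0$.

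\textbf{Step 4 (conclusion, and the main obstacle).} Define $A$ and $B$ by exponentiating the collected terms. Because $P$ has the correct marginals, these $A,B$ solve \eqref{system}, so $P$ has the structural form of the optimizer. The main obstacle is making sufficiency rigorous rather than relying only on necessity from Proposition~\ref{prop:global_bridge_form}. The plan is a direct argument: for any $Q\in\Pi(P_{X,Y},P_{X,Z})$,
\[
\mathrm{KL}(Q\|R)=\mathrm{KL}(Q\|P)+\mathbb E_Q[\log(dP/dR)].
\]
Here $\log(dP/dR)$ is a sum of a function of $(x,y)$ and a function of $(x,z)$ plus a constant. Its $Q$-expectation therefore depends only on the two fixed marginals, so it is the same for every feasible $Q$. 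Hence $\mathrm{KL}(Q\|R)$ is minimized exactly when $\mathrm{KL}(Q\|P)=0$, giving $Q^\star=P$. This requires the potentials' logs to be integrable under the Gaussian marginals, which holds because they are quadratic.
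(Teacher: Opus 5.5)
Your argument is correct and gives the same $\lambda^\star$ as the paper, by a different route. The paper reduces to the local problem \eqref{eq:local_bridge} and considers only Gaussian couplings of $P_{Y\mid X=x}$ and $P_{Z\mid X=x}$, parametrized by the conditional covariance $q$. It writes the objective as $K-\beta q-\tfrac{\lambda}{2}\log\bigl(1-q^2/(\sigma^2_{Y\mid X}\sigma^2_{Z\mid X})\bigr)$ and imposes the first-order condition at $q=\sigma_{YZ\mid X}$. This gives $\lambda^\star=\beta\det\Sigma_{(YZ)\mid X}/\sigma_{YZ\mid X}$, with $\beta=\mathrm{sgn}(\sigma_{YZ\mid X})$ or $\beta=2\Sigma_{YX}\Sigma_{XZ}/(\Sigma_{YY}\Sigma_{ZZ})$; these are exactly your formulas.

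You instead read off the $yz$ entry of the precision matrix and pick $\lambda$ so that $\log p+c/\lambda$ is separable. You then certify optimality via $\mathrm{KL}(Q\|R)=\mathrm{KL}(Q\|P)+\mathrm{const}$ on $\Pi(P_{X,Y},P_{X,Z})$. Your route buys rigor and generality:
\begin{itemize}
\item The paper never justifies that the minimizer over \emph{all} couplings is Gaussian; your decomposition shows $P$ is the unique minimizer over every admissible $Q$.
\item You work directly at the global level, so no local--global equivalence is needed.
\item You cover $\sigma_{YZ\mid X}=0$, where the paper's formula for $\lambda^\star$ is undefined.
\end{itemize}
The paper's scalar calculation is more elementary and shows how the recovered covariance moves with $\lambda$ within the Gaussian family.

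Two points to tighten. First, Step 1 has a sign slip. The factorization $p=A\,B\,e^{-c/\lambda}$ requires $c/\lambda^\star$ to equal $+K_{YZ}\,yz$ plus separable terms, not $-K_{YZ}\,yz$. This is what you actually use in Steps 2 and 3.

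Second, your remark that $\mathrm{KL}(P\|R)<\infty$ is routine ``with integrable tilt'' holds for $c_{\mathrm{align}}\ge 0$. It fails for $c_{\mathrm{oracle}}$, which is unbounded below. For example, take $m=0$, $X$ independent of $(Y,Z)$, and $Y,Z$ with unit variances and correlation $r$. Then $\lambda^\star=(1-r^2)/|r|$, and $\mathbb E_{P_\rho}[e^{-c_{\mathrm{oracle}}/\lambda^\star}]=\mathbb E[e^{rYZ/(1-r^2)}]$ with $Y,Z$ independent standard normals. This is infinite once $|r|\ge(\sqrt5-1)/2$, so $R$ is not a probability measure and \eqref{eq:global_bridge} is not literally defined there.

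The fix is to run Step 4 on the penalized objective that \eqref{eq:local_bridge} actually minimizes:
$\mathbb E_Q[c]/\lambda+\mathrm{KL}(Q\|P_\rho)=\mathrm{KL}(Q\|P)+\mathbb E_Q[c/\lambda+\log(dP/dP_\rho)]$.
For $\lambda=\lambda^\star$ the last integrand is again a separable quadratic, hence $Q$-integrable with expectation fixed by the marginals. Your conclusion $Q^\star=P$ then follows without any normalizability assumption.
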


This result highlights a fundamental property of our dependency-aware bridge: rather than serving merely as an approximate heuristic, it can perfectly reconstruct the unobserved joint distribution. The oracle cost $c_{\mathrm{oracle}}$ establishes a theoretical guarantee that exact recovery is always possible under Gaussianity. Because this cost depends on the unobserved conditional covariance, it serves primarily as an analytical benchmark. In contrast, the alignment cost $c_{\mathrm{align}}$ is fully empirical and computable directly from the separated marginal datasets. The theorem shows that optimizing the bridge with this observable surrogate still achieves exact recovery, provided the latent residual correlation aligns geometrically with the observed marginal projections ($\sigma_{YZ\mid X}\,\Sigma_{YX}\Sigma_{XZ}>0$). The explicit forms of the optimal parameter $\lambda^\star$ for both costs, along with the proof, are deferred to Appendix~\ref{supp:proof}.

Our recovery analysis provides practical guidance for tuning the hyperparameter $\lambda$. If the data is approximately Gaussian and partial information about the joint distribution $P_{Y,Z}$ (e.g., the true correlation or an auxiliary sample) is available, one can directly compute and apply the theoretical $\lambda^{\star}$. More broadly, if a small validation set of fully observed triplets $(X,Y,Z)\sim P$ is accessible, it can be used to select the $\lambda$ that maximizes downstream predictive utility on the completed data.

Assuming access to such auxiliary information is standard practice in the statistical matching literature. For Gaussian population models, \citet{kade:01} and \citet{mori:sche:01} assume a prior distribution over $\sigma_{yz}$ to guide the matching process. In marketing, \citet{gilu:etal:06} formulated customized dependency structures for binary targets. \citet{kim:etal:15} leveraged instrumental variable assumptions for $Y$ and $Z$ to recover the full joint via fractional imputation. For categorical data, \citet{fosd:etal:16} devised a Bayesian fusion method utilizing an auxiliary data file of $(Y,Z)$ or $(X,Y,Z)$. \citet{cont:etal:16} restricted the joint distribution by imposing logical constraints on the behavior of $Y$ and $Z$ given $X$ (e.g., total expenditure $Y$ cannot exceed linearly transformed income $Z$). \citet{more:etal:23} proposed calibrating a prediction-based matching method to account for distribution shifts, justifying the approach under a Gaussian model. These efforts, alongside the works by \citet{sing:etal:93} and \citet{van:etal:20}, highlight the established necessity of incorporating auxiliary structural information to relax the restrictive CIA paradigm.


\section{Experiments}
\label{sec:simul}
\paragraph{Goal}  We evaluate whether the proposed Schr\"odinger bridge (\textsf{SB}) recovers meaningful latent dependence beyond the CIA baseline on Gaussian, synthetic regression, and real-world datasets. The posterior completion procedure (Section~\ref{sec:posterior_completion}) merges the two marginal datasets into a fully imputed training set $\hat D^T= \hat D^A \cup \hat D^B$, while all downstream performance is measured on a held-out test set $D^T=(X^T,Y^T,Z^T)$ never seen during bridge learning, completion, or downstream training. Our goal is to verify whether recovering the unknown $Y$--$Z$ dependence translates into tangible gains in downstream prediction for $Y$, which indirectly assesses the quality of the recovery. Bridge potentials and downstream predictors are parameterized as MLPs; full details are deferred to Appendix~\ref{app:implementation_details}.

\paragraph{Reference methods}
We evaluate three core learning schemes: \textsf{NAIVE} predicts $Y$ from $X$ alone (the CIA baseline); \textsf{SB} predicts $Y$ using the pooled imputed dataset $\hat D^T$; and \textsf{ORACLE} trains on fully observed triplets $(X,Y,Z)$ to establish an empirical upper bound. We additionally compare against the optimal transport baseline \textsf{GO} and five mixed-method variants, \textsf{MM1}--\textsf{MM5}, which combine parametric models with hot-deck or nearest-neighbor matching \citep{dora:etal:06}. {We report only the best-performing \textsf{MM} variant per setting.} To ensure a fair comparison with \textsf{GO}, which natively requires discrete data, all models are evaluated on binned datasets. This protocol isolates the structural benefits of our EOT formulation over the unregularized OT approach (Section~\ref{sec:comp-ot}), though our bridge inherently accommodates mixed continuous and discrete variables without compromising computational efficiency. Full details regarding the binning procedure are provided in Appendix~\ref{app:discrete_protocol}.

\paragraph{Choice of $\lambda$} Following the statistical matching convention (e.g., \cite{more:etal:23}; Section~\ref{sec:joint-recovery}), we assume a small evaluation subset $C=\{(X_i,Y_i,Z_i)\}_{i=1}^{50}$ is available. \textsf{SB (Weak)} selects $\lambda$ via grid search (10 log-spaced values in $[0.01, 10]$) to maximize downstream performance on $C$, while \textsf{SB (Strong)} uses a large auxiliary set of size $|D^A|=|D^B|=10$k. The two thus represent practical and ideal regimes, respectively: when $\lambda$ selection is insensitive to sample size, their performance should coincide. We denote the chosen values by $\lambda(\text{Weak})$ and $\lambda(\text{Strong})$.

\subsection{Gaussian experiment}
We simulate $(X,Y,Z)$ from a zero-mean trivariate Gaussian distribution, fixing the covariances $\sigma_{XY}=\sigma_{XZ}=0.5$ and varying the latent dependence $\sigma_{YZ} \in \{0.3, 0.5, 0.7, 0.9\}$. We generate 30k instances total, allocated equally across $D^A$, $D^B$, and the fully observed test set $D^T$. We apply the empirical alignment cost $c_{\mathrm{align}}$ in \eqref{eq:gaussian_cost}. Figure~\ref{fig:gaussian_experiments} illustrates how score patterns vary by $\lambda$ against \textsf{NAIVE}. As $\sigma_{YZ}$ increases, the optimal $\lambda$ decreases, confirming that stronger latent $Y$--$Z$ dependence requires more aggressive coupling. When $\sigma_{YZ}\leq 0.5$, \textsf{SB} performs comparably to or slightly below \textsf{NAIVE} in the prediction performance, indicating that minimal useful dependence remains beyond the CIA baseline. However, this pattern shifts dramatically for $\sigma_{YZ} \ge 0.7$. 

The score trend of $\lambda(\text{Weak})$ tracks that of $\lambda(\text{Strong})$ across $\sigma_{YZ}$, validating our methodology even with scarce joint samples. Evaluating the full joint dependence without any $Y$--$Z$ information is fundamentally challenging; when only $\Sigma_{YZ}$ is known \citep{kade:01}, the theoretical $\lambda^\star$ from Theorem~\ref{thm:gaussian_recovery} can be used.

Table~\ref{tab:gaussian_comparison_main} evaluates downstream predictive utility ($R^2$) and donor-side imputation accuracy (RMSE). In informative regimes ($\sigma_{YZ} \in \{0.7, 0.9\}$), \textsf{SB (Strong)} consistently achieves the highest downstream $R^2$, closing $85.6\%$ of the gap to the \textsf{ORACLE} upper bound at $\sigma_{YZ}=0.7$ and approaching $99\%$ at $\sigma_{YZ}=0.9$. \textsf{SB (Weak)} tracks \textsf{SB (Strong)} closely, attaining $80.8\%$ and $98.5\%$ closure respectively. In contrast, \textsf{GO} closes a smaller fraction of the gap ($50.2\%$ and $93.1\%$), and \textsf{Best MM} remains near the \textsf{NAIVE} baseline across all regimes, offering little downstream benefit. On the donor side, \textsf{SB (Weak)} attains the lowest $RMSE^B$ at $\sigma_{YZ}=0.9$ while \textsf{GO} takes the lead at $\sigma_{YZ}=0.7$, among SB (Weak)/GO/Best MM.

We acknowledge that our method can underperform when $Y$--$Z$ dependence is weak and auxiliary tuning data is unavailable. Consequently, our framework is most beneficial when $Y$ and $Z$ are highly correlated, such as in data recoding \citep{gare:omer:22}, where target variables share the same underlying objective but differ in scale or format.

\begin{figure}
    \centering
    \includegraphics[width=0.8\linewidth]{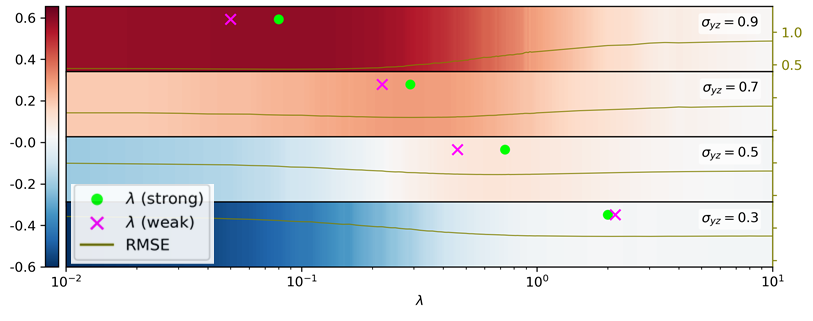}
    \caption{\textbf{(Gaussian experiments)} Heatmaps displaying the downstream $R^2$ improvement ($R_{\textsf{SB}}^2-R_{\textsf{NAIVE}}^2$) across the regularization parameter $\lambda$ (log scale) for varying latent dependence $\sigma_{YZ}$. Darker red indicates greater informative gain over the \textsf{NAIVE} (CIA) baseline. Solid curves track the downstream test RMSE (right axis). Circles and crosses mark the average improvement in $R^2$ out of 10 replicates of $\lambda (\text{Strong}), \lambda (\text{Weak})$, respectively. 
    }
    \label{fig:gaussian_experiments}
\end{figure}

\begin{table}[htbp]
\centering
\caption{(\textbf{Gaussian benchmark}) Down. $R^2$ is the coefficient of determination for predicting $Y$ on the imputed dataset $\hat D^T$; $RMSE^B$ is the imputation error for $\hat Y^B$ in $\hat D^B$ ($\cdot$ if not applicable). Boldface indicates the best performance among \textsf{SB (Weak)}, \textsf{GO}, and the selected \textsf{MM} method; $^\dagger$ marks that \textsf{SB (Strong)} achieves a score at least as good as all three. Reported as mean $\pm$ std.}
\label{tab:gaussian_comparison_main}
\small
\setlength{\tabcolsep}{2.0pt}
\resizebox{\textwidth}{!}{%
\begin{tabular}{cccccccc}
\toprule
$\sigma_{YZ}$ 
& Metric 
& \textsf{NAIVE} 
& \textsf{ORACLE} 
& \textsf{SB (Strong)} 
& \textsf{SB (Weak)}
& \textsf{GO} 
& Best \textsf{MM} \\
\midrule
\multirow{2}{*}{\textbf{0.9}} 
& Down. $R^2$ 
& $0.255 \pm 0.008$
& $0.730 \pm 0.013$
& {$0.725^{\dagger} \pm 0.014$}
& {\bfseries\boldmath $0.723 \pm 0.013$}
& $0.697 \pm 0.019$
& $0.264 \pm 0.009$ \;(\textsf{MM5}) \\
& $RMSE^B$ 
& $\cdot$
& $\cdot$
& $0.572 \pm 0.009$
& {\bfseries\boldmath $0.527 \pm 0.005$}
& $0.558 \pm 0.023$
& $0.866 \pm 0.008$ \;(\textsf{MM2}) \\
\midrule
\multirow{2}{*}{\textbf{0.7}} 
& Down. $R^2$ 
& $0.256 \pm 0.010$
& $0.485 \pm 0.009$
& {$0.452^{\dagger} \pm 0.008$}
& {\bfseries\boldmath $0.441 \pm 0.009$}
& $0.371 \pm 0.064$
& $0.261 \pm 0.008$ \;(\textsf{MM3}) \\
& $RMSE^B$ 
& $\cdot$
& $\cdot$
& $0.896 \pm 0.012$
& $0.872 \pm 0.007$
& {\bfseries\boldmath $0.800 \pm 0.031$}
& $0.866 \pm 0.009$ \;(\textsf{MM2}) \\
\midrule
\multirow{2}{*}{\textbf{0.5}} 
& Down. $R^2$ 
& $0.259 \pm 0.004$
& $0.331 \pm 0.008$
& {$0.265^{\dagger} \pm 0.009$}
& $0.238 \pm 0.016$
& $0.080 \pm 0.097$
& {\bfseries\boldmath $0.260 \pm 0.007$} \;(\textsf{MM2}) \\
& $RMSE^B$ 
& $\cdot$
& $\cdot$
& $1.038 \pm 0.003$
& $1.018 \pm 0.008$
& $0.975 \pm 0.046$
& {\bfseries\boldmath $0.868 \pm 0.010$} \;(\textsf{MM2}) \\
\midrule
\multirow{2}{*}{\textbf{0.3}} 
& Down. $R^2$ 
& $0.260 \pm 0.006$
& $0.262 \pm 0.007$
& $0.207 \pm 0.010$
& $0.209 \pm 0.010$
& $-0.191 \pm 0.095$
& {\bfseries\boldmath $0.260 \pm 0.006$} \;(\textsf{MM2}) \\
& $RMSE^B$ 
& $\cdot$
& $\cdot$
& $1.099 \pm 0.001$
& $1.096 \pm 0.010$
& $1.110 \pm 0.044$
& {\bfseries\boldmath $0.867 \pm 0.009$} \;(\textsf{MM2}) \\
\bottomrule
\end{tabular}
}
\vspace{-0.2cm}
\end{table}

\subsection{Regression experiments}

Next, regression experiments are designed to examine how well \textsf{SB} exploits residual dependence, i.e., the latent association between $Y$ and $Z$ that cannot be explained by $X$ alone. We simulate a standard normal covariate $X=(X_1,X_2)$ and Gaussian noise $(e_1,e_2)$ with variance $\sigma_{\mathrm{noise}}^2\in\{0.1, 0.5\}$ and correlation $\rho_{\mathrm{noise}}\in\{0.3, 0.6, 0.9\}$. We consider the following three data-generating processes:

\quad\raisebox{0.33ex}{\tiny$\bullet$} SYN1 (Linear): $Y=X_1+2X_2+e_1,\quad Z=2X_1+X_2+e_2$.

\quad\raisebox{0.33ex}{\tiny$\bullet$} SYN2 (Nonlinear I): $Y=\sin(X_1)+X_2^2+e_1,\quad Z=\sin(X_1+X_2)+e_2$.

\quad\raisebox{0.33ex}{\tiny$\bullet$} SYN3 (Nonlinear II): $Y=\log(1+e^{X_1})+X_2+e_1,\quad Z=\tfrac12\log(1+e^{X_1})+X_2^2+e_2$.

Because $X$ already explains a substantial portion of both $Y$ and $Z$ in these synthetic scenarios, we employ a cost function that specifically targets the residual structure. We first standardize $y$ and $z$, subtract the empirical conditional mean explained by $X$, and then compare the remaining residuals:
\begin{align*}
    c(x,y,z)=
\Big[
\Big(
\frac{y-\hat\mu_Y}{\hat\sigma_Y}
-{\mathbb E} \Big[\frac{Y-\hat\mu_Y}{\hat\sigma_Y} \Big| X=x\Big]\Big)
-
\Big(
\frac{z-\hat \mu_Z}{\hat\sigma_Z}
-{\mathbb E} \Big[\frac{Z-\hat\mu_Z}{\hat\sigma_Z} \Big| X=x\Big]
\Big)
\Big]^2,
\end{align*}
where $\hat \mu_Y,\hat \mu_Z,\hat\sigma_Y, \hat\sigma_Z$ are the MLE estimates of the Gaussian means and standard deviations, and the conditional mean is estimated by neural networks.
\begin{table}[t]
\centering
\caption{(\textbf{Regression benchmark}) Refer to the exposition in Table~\ref{tab:gaussian_comparison_main}.
}

\label{tab:syn1_comparison_revised}
\small
SYN1 benchmark
\setlength{\tabcolsep}{2.0pt}
\resizebox{\textwidth}{!}{%
\begin{tabular}{ccccccccc}
\toprule
$\sigma^2_{\text{noise}}$ 
& $\rho_{\text{noise}}$ 
& Metric 
& \textsf{NAIVE} 
& \textsf{ORACLE} 
& \textsf{SB (Strong)} 
& \textsf{SB (Weak)}
& \textsf{GO} 
& Best \textsf{MM} \\
\midrule

\multirow{2}{*}{\textbf{0.5}}
& \multirow{2}{*}{\textbf{0.9}}
& Down. $R^2$
& $0.907 \pm 0.001$
& $0.948 \pm 0.004$
& {$0.948^{\dagger} \pm 0.004$}
& {\bfseries\boldmath $0.944 \pm 0.004$}
& $0.776 \pm 0.027$
& $0.908 \pm 0.001$ \;(\textsf{MM2}) \\
& 
& $RMSE^B$
& $\cdot$
& $\cdot$
& {$0.527^{\dagger} \pm 0.002$}
& {\bfseries\boldmath $0.661 \pm 0.005$}
& $1.310 \pm 0.092$
& $0.706 \pm 0.007$ \;(\textsf{MM2}) \\

\midrule

\multirow{2}{*}{\textbf{0.5}}
& \multirow{2}{*}{\textbf{0.6}}
& Down. $R^2$
& $0.907 \pm 0.001$
& $0.925 \pm 0.002$
& {$0.924^{\dagger} \pm 0.001$}
& {\bfseries\boldmath $0.922 \pm 0.001$}
& $0.743 \pm 0.020$
& $0.907 \pm 0.001$ \;(\textsf{MM4}) \\
& 
& $RMSE^B$
& $\cdot$
& $\cdot$
& $0.760 \pm 0.003$
& $0.840 \pm 0.001$
& $1.377 \pm 0.082$
& {\bfseries\boldmath $0.705 \pm 0.006$} \;(\textsf{MM2}) \\

\midrule

\multirow{2}{*}{\textbf{0.5}}
& \multirow{2}{*}{\textbf{0.3}}
& Down. $R^2$
& $0.907 \pm 0.001$
& $0.911 \pm 0.001$
& $0.909 \pm 0.001$
& {\bfseries\boldmath $0.910 \pm 0.001$}
& $0.713 \pm 0.021$
& $0.907 \pm 0.001$ \;(\textsf{MM2}) \\
& 
& $RMSE^B$
& $\cdot$
& $\cdot$
& $0.903 \pm 0.008$
& $0.933 \pm 0.006$
& $1.446 \pm 0.068$
& {\bfseries\boldmath $0.704 \pm 0.005$} \;(\textsf{MM2}) \\

\midrule

\multirow{2}{*}{\textbf{0.1}}
& \multirow{2}{*}{\textbf{0.9}}
& Down. $R^2$
& $0.980 \pm 0.000$
& $0.985 \pm 0.001$
& {$0.985^{\dagger} \pm 0.001$}
& {\bfseries\boldmath $0.984 \pm 0.001$}
& $0.852 \pm 0.018$
& $0.980 \pm 0.000$ \;(\textsf{MM2}) \\
& 
& $RMSE^B$
& $\cdot$
& $\cdot$
& {$0.277^{\dagger} \pm 0.001$}
& {\bfseries\boldmath $0.298 \pm 0.003$}
& $1.132 \pm 0.084$
& $0.316 \pm 0.003$ \;(\textsf{MM2}) \\

\midrule

\multirow{2}{*}{\textbf{0.1}}
& \multirow{2}{*}{\textbf{0.6}}
& Down. $R^2$
& $0.980 \pm 0.000$
& $0.982 \pm 0.000$
& {$0.982^{\dagger} \pm 0.000$}
& {\bfseries\boldmath $0.981 \pm 0.000$}
& $0.850 \pm 0.022$
& $0.980 \pm 0.000$ \;(\textsf{MM2}) \\
& 
& $RMSE^B$
& $\cdot$
& $\cdot$
& $0.336 \pm 0.001$
& $0.376 \pm 0.001$
& $1.143 \pm 0.077$
& {\bfseries\boldmath $0.315 \pm 0.003$} \;(\textsf{MM2}) \\

\midrule

\multirow{2}{*}{\textbf{0.1}}
& \multirow{2}{*}{\textbf{0.3}}
& Down. $R^2$
& $0.980 \pm 0.000$
& $0.980 \pm 0.000$
& {$0.980^{\dagger} \pm 0.000$}
& {\bfseries\boldmath $0.980 \pm 0.000$}
& $0.849 \pm 0.017$
& {\bfseries\boldmath $0.980 \pm 0.000$} \;(\textsf{MM2}) \\
& 
& $RMSE^B$
& $\cdot$
& $\cdot$
& $0.403 \pm 0.004$
& $0.418 \pm 0.002$
& $1.151 \pm 0.073$
& {\bfseries\boldmath $0.315 \pm 0.002$} \;(\textsf{MM2}) \\

\bottomrule
\end{tabular}
}

\vspace{2mm}

SYN2 benchmark
\label{tab:syn2_comparison_revised}
\small
\setlength{\tabcolsep}{2.0pt}
\resizebox{\textwidth}{!}{%
\begin{tabular}{ccccccccc}
\toprule
$\sigma^2_{\text{noise}}$ 
& $\rho_{\text{noise}}$ 
& Metric 
& \textsf{NAIVE} 
& \textsf{ORACLE} 
& \textsf{SB (Strong)} 
& \textsf{SB (Weak)}
& \textsf{GO} 
& Best \textsf{MM} \\
\midrule

\multirow{2}{*}{\textbf{0.5}}
& \multirow{2}{*}{\textbf{0.9}}
& Down. $R^2$
& $0.821 \pm 0.004$
& $0.937 \pm 0.005$
& $0.930 \pm 0.005$
& {\bfseries\boldmath $0.930 \pm 0.005$}
& $0.361 \pm 0.042$
& $0.657 \pm 0.013$ \;(\textsf{MM1}) \\
&
& $RMSE^B$
& $\cdot$
& $\cdot$
& {$0.568^{\dagger} \pm 0.005$}
& {\bfseries\boldmath $0.585 \pm 0.006$}
& $1.497 \pm 0.055$
& $1.588 \pm 0.025$ \;(\textsf{MM2}) \\

\midrule

\multirow{2}{*}{\textbf{0.5}}
& \multirow{2}{*}{\textbf{0.6}}
& Down. $R^2$
& $0.821 \pm 0.003$
& $0.872 \pm 0.002$
& {$0.861^{\dagger} \pm 0.003$}
& {\bfseries\boldmath $0.854 \pm 0.001$}
& $0.232 \pm 0.043$
& $0.641 \pm 0.008$ \;(\textsf{MM1}) \\
&
& $RMSE^B$
& $\cdot$
& $\cdot$
& {$0.774^{\dagger} \pm 0.007$}
& {\bfseries\boldmath $0.865 \pm 0.008$}
& $1.585 \pm 0.055$
& $1.585 \pm 0.024$ \;(\textsf{MM2}) \\

\midrule

\multirow{2}{*}{\textbf{0.5}}
& \multirow{2}{*}{\textbf{0.3}}
& Down. $R^2$
& $0.821 \pm 0.003$
& $0.833 \pm 0.002$
& $0.821 \pm 0.002$
& {\bfseries\boldmath $0.821 \pm 0.003$}
& $0.148 \pm 0.084$
& $0.647 \pm 0.010$ \;(\textsf{MM1}) \\
&
& $RMSE^B$
& $\cdot$
& $\cdot$
& {$0.899^{\dagger} \pm 0.004$}
& {\bfseries\boldmath $0.936 \pm 0.009$}
& $1.656 \pm 0.068$
& $1.584 \pm 0.023$ \;(\textsf{MM2}) \\

\midrule

\multirow{2}{*}{\textbf{0.1}}
& \multirow{2}{*}{\textbf{0.9}}
& Down. $R^2$
& $0.958 \pm 0.001$
& $0.980 \pm 0.002$
& {$0.978^{\dagger} \pm 0.002$}
& {\bfseries\boldmath $0.969 \pm 0.001$}
& $0.445 \pm 0.040$
& $0.759 \pm 0.008$ \;(\textsf{MM1}) \\
&
& $RMSE^B$
& $\cdot$
& $\cdot$
& {$0.285^{\dagger} \pm 0.001$}
& {\bfseries\boldmath $0.309 \pm 0.004$}
& $1.333 \pm 0.061$
& $1.457 \pm 0.024$ \;(\textsf{MM2}) \\

\midrule

\multirow{2}{*}{\textbf{0.1}}
& \multirow{2}{*}{\textbf{0.6}}
& Down. $R^2$
& $0.958 \pm 0.001$
& $0.967 \pm 0.001$
& {$0.966^{\dagger} \pm 0.001$}
& {\bfseries\boldmath $0.964 \pm 0.001$}
& $0.452 \pm 0.019$
& $0.757 \pm 0.008$ \;(\textsf{MM1}) \\
&
& $RMSE^B$
& $\cdot$
& $\cdot$
& {$0.370^{\dagger} \pm 0.002$}
& {\bfseries\boldmath $0.400 \pm 0.001$}
& $1.319 \pm 0.049$
& $1.456 \pm 0.024$ \;(\textsf{MM2}) \\

\midrule

\multirow{2}{*}{\textbf{0.1}}
& \multirow{2}{*}{\textbf{0.3}}
& Down. $R^2$
& $0.958 \pm 0.001$
& $0.960 \pm 0.001$
& {$0.958^{\dagger} \pm 0.000$}
& {\bfseries\boldmath $0.958 \pm 0.001$}
& $0.417 \pm 0.042$
& $0.756 \pm 0.006$ \;(\textsf{MM1}) \\
&
& $RMSE^B$
& $\cdot$
& $\cdot$
& {$0.420^{\dagger} \pm 0.004$}
& {\bfseries\boldmath $0.432 \pm 0.002$}
& $1.358 \pm 0.064$
& $1.455 \pm 0.023$ \;(\textsf{MM2}) \\

\bottomrule
\end{tabular}
}

\vspace{2mm}

SYN3 benchmark
\label{tab:syn3_comparison_revised}
\small
\setlength{\tabcolsep}{2.0pt}
\resizebox{\textwidth}{!}{%
\begin{tabular}{ccccccccc}
\toprule
$\sigma^2_{\text{noise}}$ 
& $\rho_{\text{noise}}$ 
& Metric 
& \textsf{NAIVE} 
& \textsf{ORACLE} 
& \textsf{SB (Strong)} 
& \textsf{SB (Weak)}
& \textsf{GO} 
& Best \textsf{MM} \\
\midrule

\multirow{2}{*}{\textbf{0.5}}
& \multirow{2}{*}{\textbf{0.9}}
& Down. $R^2$
& $0.713 \pm 0.002$
& $0.881 \pm 0.007$
& {$0.879^{\dagger} \pm 0.007$}
& {\bfseries\boldmath $0.875 \pm 0.007$}
& $0.598 \pm 0.042$
& $0.716 \pm 0.010$ \;(\textsf{MM3}) \\
&
& $RMSE^B$
& $\cdot$
& $\cdot$
& {$0.455^{\dagger} \pm 0.001$}
& {\bfseries\boldmath $0.547 \pm 0.007$}
& $0.882 \pm 0.044$
& $0.721 \pm 0.007$ \;(\textsf{MM2}) \\

\midrule

\multirow{2}{*}{\textbf{0.5}}
& \multirow{2}{*}{\textbf{0.6}}
& Down. $R^2$
& $0.713 \pm 0.003$
& $0.786 \pm 0.004$
& {$0.780^{\dagger} \pm 0.004$}
& {\bfseries\boldmath $0.766 \pm 0.004$}
& $0.487 \pm 0.061$
& $0.710 \pm 0.003$ \;(\textsf{MM4}) \\
&
& $RMSE^B$
& $\cdot$
& $\cdot$
& {$0.707^{\dagger} \pm 0.004$}
& $0.830 \pm 0.009$
& $0.980 \pm 0.060$
& {\bfseries\boldmath $0.719 \pm 0.006$} \;(\textsf{MM2}) \\

\midrule

\multirow{2}{*}{\textbf{0.5}}
& \multirow{2}{*}{\textbf{0.3}}
& Down. $R^2$
& $0.712 \pm 0.003$
& $0.730 \pm 0.003$
& $0.716 \pm 0.004$
& {\bfseries\boldmath $0.718 \pm 0.003$}
& $0.381 \pm 0.059$
& $0.710 \pm 0.003$ \;(\textsf{MM3}) \\
&
& $RMSE^B$
& $\cdot$
& $\cdot$
& $0.867 \pm 0.007$
& $0.915 \pm 0.003$
& $1.079 \pm 0.045$
& {\bfseries\boldmath $0.718 \pm 0.005$} \;(\textsf{MM2}) \\

\midrule

\multirow{2}{*}{\textbf{0.1}}
& \multirow{2}{*}{\textbf{0.9}}
& Down. $R^2$
& $0.924 \pm 0.002$
& $0.959 \pm 0.004$
& {$0.957^{\dagger} \pm 0.004$}
& {\bfseries\boldmath $0.956 \pm 0.004$}
& $0.766 \pm 0.036$
& $0.919 \pm 0.004$ \;(\textsf{MM1}) \\
&
& $RMSE^B$
& $\cdot$
& $\cdot$
& {$0.235^{\dagger} \pm 0.005$}
& {\bfseries\boldmath $0.251 \pm 0.002$}
& $0.653 \pm 0.046$
& $0.348 \pm 0.004$ \;(\textsf{MM2}) \\

\midrule

\multirow{2}{*}{\textbf{0.1}}
& \multirow{2}{*}{\textbf{0.6}}
& Down. $R^2$
& $0.924 \pm 0.002$
& $0.939 \pm 0.004$
& {$0.938^{\dagger} \pm 0.004$}
& {\bfseries\boldmath $0.938 \pm 0.002$}
& $0.746 \pm 0.036$
& $0.918 \pm 0.004$ \;(\textsf{MM3}) \\
&
& $RMSE^B$
& $\cdot$
& $\cdot$
& {$0.329^{\dagger} \pm 0.004$}
& {\bfseries\boldmath $0.339 \pm 0.000$}
& $0.676 \pm 0.046$
& $0.347 \pm 0.004$ \;(\textsf{MM2}) \\

\midrule

\multirow{2}{*}{\textbf{0.1}}
& \multirow{2}{*}{\textbf{0.3}}
& Down. $R^2$
& $0.924 \pm 0.002$
& $0.928 \pm 0.004$
& {$0.926^{\dagger} \pm 0.004$}
& {\bfseries\boldmath $0.926 \pm 0.001$}
& $0.721 \pm 0.046$
& $0.918 \pm 0.004$ \;(\textsf{MM3}) \\
&
& $RMSE^B$
& $\cdot$
& $\cdot$
& $0.401 \pm 0.004$
& $0.417 \pm 0.002$
& $0.701 \pm 0.046$
& {\bfseries\boldmath $0.346 \pm 0.004$} \;(\textsf{MM2}) \\

\bottomrule
\end{tabular}
}
\end{table}

Table~\ref{tab:syn1_comparison_revised} reports results across three regression scenarios. The advantage of \textsf{SB (Weak)} over \textsf{NAIVE} grows with the residual dependence $\rho_{\mathrm{noise}}$ and is largest in the nonlinear settings, where the covariate $X$ explains less of the joint structure. In SYN1 (linear), \textsf{NAIVE} is already a strong baseline, yet \textsf{SB (Weak)} still recovers most of the remaining gap to \textsf{ORACLE} once $\rho_{\mathrm{noise}} \geq 0.6$ (e.g., $0.922$ vs.\ $0.925$ at $\sigma^2_{\mathrm{noise}}=0.5,\rho_{\mathrm{noise}}=0.6$). In SYN2, where the dependence is harder to capture, \textsf{SB (Weak)} closes a substantial portion of the gap at $\rho_{\mathrm{noise}}=0.9$ ($0.930$ vs.\ \textsf{ORACLE}\ $0.937$, against \textsf{NAIVE}\ $0.821$), and similarly clear gains appear in SYN3. Throughout,  \textsf{GO} stays well below NAIVE;  \textsf{Best MM} tracks NAIVE closely on SYN1 and SYN3 but falls well below it on SYN2, indicating that neither competitor reliably exploits the residual signal that SB captures. 
When the residual dependence is weak ($\rho_{\mathrm{noise}}=0.3$), \textsf{SB (Weak)} performs comparably to \textsf{NAIVE}, reflecting that little additional predictive information remains to import.

\subsection{Real-data experiments}
\label{sec:real-data-exp}
We evaluate \textsf{SB} on two real-world datasets: CelebA \citep{liu2015deep} and Adult \citep{becker1996adult}. For both scenarios, we employ the empirical alignment cost $c_{\mathrm{align}}$ defined in \eqref{eq:gaussian_cost}. For CelebA, we utilize an 8-dimensional VAE latent embedding of $(218,178,3)$ images \citep{kingma2013auto} as the covariates $X$, the binary \texttt{Attractive} attribute as the target $Y$, and {five binary facial attributes to form a 32-state discrete auxiliary variable $Z$.}
\begin{wrapfigure}{r}{0.59\textwidth}
\vspace{-4mm}
\centering
\includegraphics[width=0.48\textwidth]{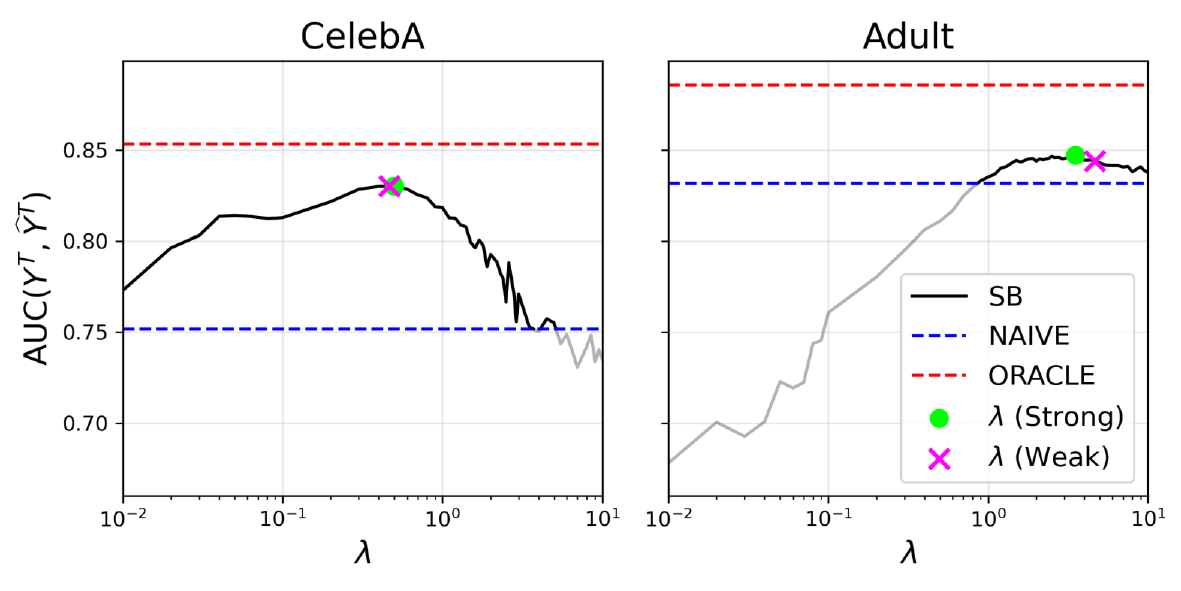}
\caption{Downstream test AUC, $\mathrm{AUC}(Y^T,\hat Y^T)$, vs.\ regularization parameter $\lambda$ on CelebA (left) and Adult (right). Dashed lines mark \textsf{NAIVE} (blue) and \textsf{ORACLE} (red); the green dot and magenta cross indicate $\lambda$ selected by \textsf{SB (Strong)} and \textsf{SB (Weak)}, respectively.}
\label{fig:syn_rec}
\vspace{-2mm}
\end{wrapfigure}
For Adult, $Y$ indicates whether income exceeds \$50K, $X$ comprises standard demographic covariates, and $Z$ is constructed via $K$-means clustering ($K=32$) over education, occupation, workclass, and financial features.

Consistent with the synthetic results, \textsf{SB} delivers tangible downstream gains on both real-world datasets. On CelebA, as shown in Table~\ref{tab:real_comparison_revised}, \textsf{SB (Weak)} attains the best downstream and donor-side AUC, closing $77.2\%$ and $74.5\%$ of the gap to \textsf{ORACLE}, while \textsf{GO} falls below \textsf{NAIVE} and \textsf{Best MM} trails \textsf{SB}. On Adult, where \textsf{NAIVE} is already strong ($0.834$), the headroom is small, yet \textsf{SB (Weak)} still leads on both metrics, with a clear donor-side margin ($0.843$ vs.\ $0.813$). Figure~\ref{fig:syn_rec} shows that the $\lambda$ chosen by \textsf{SB (Weak)} from marginal constraints alone lies near the curve's peak, essentially matching \textsf{SB (Strong)}. Practitioners can thus deploy our framework using only the completed datasets, validating against the \textsf{NAIVE} baseline.
\begin{table}[htbp]
\centering
\caption{\textbf{(Real-world benchmarks)} Down. AUC is the downstream test AUC for predicting $Y$; $AUC^B$ is the donor-side imputation AUC for $\hat Y^B$. Note Table~\ref{tab:gaussian_comparison_main} explains the symbols.} 
\label{tab:real_comparison_revised}
\small
\setlength{\tabcolsep}{2.0pt}
\resizebox{\textwidth}{!}{%
\begin{tabular}{cccccccc}
\toprule
Dataset 
& Metric 
& \textsf{NAIVE} 
& \textsf{ORACLE} 
& \textsf{SB (Strong)} 
& \textsf{SB (Weak)}
& \textsf{GO} 
& Best \textsf{MM} \\
\midrule
\multirow{2}{*}{\textbf{CelebA}}
& Down. AUC
& $0.752 \pm 0.001$
& $0.853 \pm 0.002$
& {$0.830^{\dagger} \pm 0.002$}
& {\bfseries\boldmath $0.830 \pm 0.001$}
& $0.713 \pm 0.020$
& $0.814 \pm 0.007$ \;(\textsf{MM3}) \\
& $AUC^B$
& $\cdot$
& $\cdot$
& {$0.832^{\dagger} \pm 0.002$}
& {\bfseries\boldmath $0.832 \pm 0.002$}
& $0.714 \pm 0.006$
& $0.788 \pm 0.006$ \;(\textsf{MM5}) \\
\midrule
\multirow{2}{*}{\textbf{Adult}}
& Down. AUC
& $0.834 \pm 0.002$
& $0.888 \pm 0.002$
& {$0.845^{\dagger} \pm 0.002$}
& {\bfseries\boldmath $0.844 \pm 0.004$}
& $0.516 \pm 0.039$
& $0.841 \pm 0.003$ \;(\textsf{MM3}) \\
& $AUC^B$
& $\cdot$
& $\cdot$
& {$0.843^{\dagger} \pm 0.001$}
& {\bfseries\boldmath $0.843 \pm 0.001$}
& $0.507 \pm 0.015$
& $0.813 \pm 0.004$ \;(\textsf{MM1}) \\
\bottomrule
\end{tabular}
}
\vspace{-0.2cm}
\end{table}

\section{Conclusion}
\label{sec:conclusion}

We introduced a dependency-aware Schr\"odinger bridge for predictive statistical matching. By exponentially tilting the conditional independence baseline with a compatibility cost, our framework preserves observed marginals while actively synthesizing informative latent dependence. Theoretical results guarantee exact joint recovery under Gaussianity, and empirical benchmarks confirm \textsf{SB} maximizes downstream predictive utility when variables exhibit strong underlying correlation. 

{\bf Broader impacts and limitations.} While dependency-aware matching significantly improves predictive integration, imposing data dependencies risks amplifying historical biases or spurious correlations present in the marginals, necessitating fairness audits when integrating sensitive attributes. Methodologically, our exact recovery guarantees are restricted to Gaussian regimes, and performance depends on the compatibility cost and regularization parameter. While our empirical tuning heuristic performs well, developing a principled, assumption-free tuning criterion remains an open challenge. Future work will explore non-Gaussian recovery, adaptive costs, and algorithmic bias mitigation.


\bibliographystyle{apalike}
\bibliography{ref.bib}

\newpage

\appendix

\bigskip
\begin{center}
{\large\bf Appendix}
\end{center}

\begin{center}
    {\bf Title: Statistical Matching via Schr\"odinger Bridge beyond Conditional Independence}
\end{center}

\section{Proofs}
\label{supp:proof}

\subsection{Proposition~\ref{prop:global_bridge_form}}

\begin{proof}
Let $q(x,y,z)$ and $r(x,y,z)$ denote the densities of $Q$ and $R$, respectively. The  problem \eqref{eq:global_bridge} seeks to minimize $\mathrm{KL}(Q\|R)$ subject to the marginal constraints $\int q(x,y,z)dz = p_{X,Y}(x,y)$ and $\int q(x,y,z)dy = p_{X,Z}(x,z)$. Introducing Lagrange multipliers $f(x,y)$ and $g(x,z)$ for these constraints, the Lagrangian is given by:
\begin{align*}
\mathcal{L}(q, f, g) = \int q(x,y,z)\log \frac{q(x,y,z)}{r(x,y,z)}dxdydz &+ \int f(x,y)\Big(p_{X,Y}(x,y)-\int q(x,y,z)dz\Big)dxdy \\
&+ \int g(x,z)\Big(p_{X,Z}(x,z)-\int q(x,y,z)dy\Big)dxdz.
\end{align*}

Taking the variational derivative with respect to $q$ and setting it to zero yields the optimality condition:
$$
\log \frac{q(x,y,z)}{r(x,y,z)} - f(x,y) - g(x,z) + 1 = 0,
$$
which implies $q^\star(x,y,z) = r(x,y,z)\exp(f(x,y) + g(x,z) - 1)$. 

By the definition of the reference measure $R$, $r(x,y,z) \propto e^{-c(x,y,z)/\lambda}p_{X}(x)p_{Y\mid X}(y | x)p_{Z | X}(z | x)$. Absorbing the terms depending only on $(x,y)$ into $A(x,y) \propto \exp(f(x,y))$ and those depending only on $(x,z)$ into $B(x,z) \propto \exp(g(x,z))$, we obtain the factorized form:
$$
q^\star(x,y,z) = A(x,y)B(x,z)e^{-c(x,y,z)/\lambda}.
$$

The integral equations \eqref{system} follow by substituting this into the marginal constraints, and the conditional distributions follow by dividing $q^\star(x,y,z)$ by the respective marginals $p_{X,Y}$ and $p_{X,Z}$.
\end{proof}

\subsection{Theorem~\ref{thm:improvement}}

\begin{proof}
Recall $P=P_{X,Y,Z}$ is the true joint law, $P_{\rho}=P_{X}P_{Y\mid X}P_{Z\mid X}$ is the CIA product law, and $R$ is defined by $dR(x,y,z) = C_R^{-1} \exp\{-c(x,y,z)/\lambda\} dP_\rho(x,y,z)$, where $C_R = \mathbb{E}_{P_\rho}[\exp\{-c(X,Y,Z)/\lambda\}]$. First, observe that for any joint distribution $Q$ sharing the same $(X,Y)$ and $(X,Z)$ marginals as $P$, the difference in KL divergences simplifies to:
$$
\mathrm{KL}(P\|P_\rho) - \mathrm{KL}(P\|Q) = \mathbb{E}_{P_{X,Y}}[\mathrm{KL}(P_{Z\mid X,Y}\|P_{Z\mid X})] - \mathbb{E}_{P_{X,Y}}[\mathrm{KL}(P_{Z\mid X,Y}\|Q_{Z\mid X,Y})] = \Delta_Z(Q).
$$

Since $Q^\star$ is the KL-projection of $R$ onto $\Pi(P_{X,Y}, P_{X,Z})$, it satisfies the generalized Pythagorean theorem: 
$\mathrm{KL}(P\|R) \geq \mathrm{KL}(P\|Q^\star) + \mathrm{KL}(Q^\star \|R)$, hence $\mathrm{KL}(P\|Q^\star) \leq \mathrm{KL}(P\|R)$. This implies: 
$$
\Delta_Z(Q^\star) = \mathrm{KL}(P\|P_\rho) - \mathrm{KL}(P\|Q^\star) \geq \mathrm{KL}(P\|P_\rho) - \mathrm{KL}(P\|R) = \mathbb{E}_P[\log dR] - \mathbb{E}_P[\log dP_\rho].
$$

Substituting the definition of $dR$ yields:
$$
\Delta_Z(Q^\star) \geq -\log C_R - \frac{1}{\lambda}\mathbb{E}_P[c(X,Y,Z)].
$$

Using the inequality $-\log u \geq \frac{1}{2}(u-1)^2 - (u-1)$ for all $u \leq 1$, and setting $u = C_R$ (since $c \geq 0$ ensures $C_R \leq 1$), we obtain:
$$
\Delta_Z(Q^\star) \geq \frac{1}{2}\left(\mathbb{E}_{P_\rho}[e^{-c(X,Y,Z)/\lambda}] - 1\right)^2 - \left(\mathbb{E}_{P_\rho}[e^{-c(X,Y,Z)/\lambda}] - 1\right) - \frac{1}{\lambda}\mathbb{E}_P[c(X,Y,Z)]
$$
$$
= C_\lambda - \mathbb{E}_{P_\rho}[e^{-c(X,Y,Z)/\lambda}] + 1 - \frac{1}{\lambda}\mathbb{E}_P[c(X,Y,Z)].
$$

Finally, applying the inequality $e^{-v} \leq 1 - v + \frac{1}{2}v^2$ for $v = c(X,Y,Z)/\lambda \geq 0$ gives:
$$
\Delta_Z(Q^\star) \geq C_\lambda - \mathbb{E}_{P_\rho}\left[1 - \frac{c(X,Y,Z)}{\lambda} + \frac{c(X,Y,Z)^2}{2\lambda^2}\right] + 1 - \frac{1}{\lambda}\mathbb{E}_P[c(X,Y,Z)]
$$
$$
= C_\lambda + \frac{1}{\lambda}\left(\mathbb{E}_{P_\rho}[c(X,Y,Z)] - \mathbb{E}_P[c(X,Y,Z)]\right) - \frac{1}{2\lambda^2}\mathbb{E}_{P_\rho}[c(X,Y,Z)^2] = C_\lambda + \frac{\delta_c}{\lambda} - \frac{C}{\lambda^2}.
$$
This completes the proof.
\end{proof}

\subsection{Theorem~\ref{thm:gaussian_recovery}}

\begin{proof}
By the equivalence of the global and local bridge formulations, $Q^\star = P$ if and only if the local bridge $\pi_x^\star$ recovers the true conditional law $P_{Y,Z\mid X=x}$ for all $x$. Because the conditional marginals $P_{Y\mid X=x}$ and $P_{Z\mid X=x}$ are fixed by the marginal constraints $P_{X,Y}$ and $P_{X,Z}$, any admissible Gaussian coupling $\pi_x$ is entirely determined by its conditional covariance $q = \mathrm{Cov}_{\pi_x}(Y,Z \mid X=x)$.

Let $\sigma_{Y\mid X}^2$ and $\sigma_{Z\mid X}^2$ denote the conditional variance of $Y$ and $Z$ given $X$. The KL divergence between the coupled Gaussian $\pi_x$ and the independent baseline $\rho_x$ is:
$$
 \mathrm{KL}(\pi_x \| \rho_x) = -\frac{1}{2}\log\Big(1 - \frac{q^2}{\sigma_{Y\mid X}^2 \sigma_{Z\mid X}^2}\Big).
$$

The expected cost under $\pi_x$ takes the form $\mathbb{E}_{\pi_x}[c(Y,Z)] = K - \beta q$, where $K$ depends only on the fixed marginal variances. The cross-term coefficient $\beta$ depends on the chosen cost:\\
$\bullet$ For $c = c_{\mathrm{oracle}}$, we trivially have $\beta = \mathrm{sgn}(\sigma_{YZ\mid X})$.\\
$\bullet$ For $c = c_{\mathrm{align}}$, the conditional expectations are $\mathbb{E}[X\mid Y=y] = \gamma_Y y$ and $\mathbb{E}[X\mid Z=z] = \gamma_Z z$, where $\gamma_Y = \Sigma_{XY}\Sigma_{YY}^{-1}$ and $\gamma_Z = \Sigma_{XZ}\Sigma_{ZZ}^{-1}$. The cross-term of the squared $\ell_2$ norm yields $\beta = 2\gamma_Y^\top \gamma_Z$. Because $\Sigma_{YY}$ and $\Sigma_{ZZ}$ are positive scalars, we have $\mathrm{sgn}(\beta) = \mathrm{sgn}(\Sigma_{YX}\Sigma_{XZ})$. 

The local bridge problem \eqref{eq:local_bridge} thus minimizes the objective $J(q) = K - \beta q + \lambda \mathrm{KL}(\pi_x \| \rho_x)$. Hence
$$
\frac{\partial J}{\partial q} = -\beta + \frac{\lambda q}{\sigma_{Y\mid X}^2 \sigma_{Z\mid X}^2 - q^2} = 0.
$$

For exact recovery, we require the optimal covariance to match the true conditional covariance, $q = \sigma_{YZ\mid X}$. Substituting this into the first-order condition and noting that the denominator becomes the determinant of the true conditional covariance matrix, $|\Sigma_{YZ\mid X}|$, solving for $\lambda$ yields
$$
\lambda^\star = \frac{\beta |\Sigma_{YZ\mid X}|}{\sigma_{YZ\mid X}}.
$$
We see that the exact recovery is possible if and only if $\lambda^\star > 0$ if and only if $\mathrm{sgn}(\beta) = \mathrm{sgn}(\sigma_{YZ\mid X})$.\\ 
$\bullet$  If $c = c_{\mathrm{oracle}}$, $\lambda^\star = |\Sigma_{YZ\mid X}| / |\sigma_{YZ\mid X}| > 0$ holds universally.\\
$\bullet$  If $c = c_{\mathrm{align}}$, the condition $\sigma_{YZ\mid X}\,\Sigma_{YX}\Sigma_{XZ} > 0$ guarantees $\mathrm{sgn}(\beta) = \mathrm{sgn}(\sigma_{YZ\mid X})$.

Under either condition, setting $\lambda = \lambda^\star$ uniquely minimizes the local bridge objective at the true joint law for all $x$, resulting in $Q^\star = P$.
\end{proof}

\section{Experiment studies}
\label{supp:simul}

This section consists of the following categories: 
\begin{itemize}
    \item Section~\ref{app:implementation_details} explains the compatibility cost, data preprocessing procedure, and neural network structure in detail for each experiment.
    \item Section~\ref{app:discrete_protocol} explains the binning procedure used for continuous variables.   
    \item Section~\ref{app:metrics} specifically defines the evaluation metrics.
    \item Section~\ref{app:lambda_selection} introduces the procedure of selecting $\lambda$.
    \item Section~\ref{app:compute_resources} reports the computational resources used for the experiments.
\end{itemize}

\subsection{Architectures, costs, and preprocessing}
\label{app:implementation_details}

This section specifies the experimental details. Recall the recipient file is denoted by $D^A=\{(X_i^A,Y_i^A)\}_{i=1}^{n_A}$, 
the donor file by $D^B=\{(X_j^B,Z_j^B)\}_{j=1}^{n_B}$, 
and the fully observed test file by $D^T=\{(X_\ell^T,Y_\ell^T,Z_\ell^T)\}_{\ell=1}^{n_T}$. 
Equivalently, we write
\[
    D^A=(X^A,Y^A),\qquad
    D^B=(X^B,Z^B),\qquad
    D^T=(X^T,Y^T,Z^T).
\]
The test file \(D^T\) is used only for final evaluation and is not used for bridge learning, posterior completion, downstream training, or selecting \(\lambda\). For \textsf{SB (Strong)}, an additional fully observed auxiliary set of size \(|D^A|=|D^B|=10\)k is used for \(\lambda\)-selection only (see Appendix~\ref{app:lambda_selection}).

The dimensions of \(X,Y,Z\) are denoted by \(d_X,d_Y,d_Z\). For a continuous variable \(W\), \(\bar W\) and \(s_W\) denote the empirical mean and empirical standard deviation computed from the file in which \(W\) is observed. The standardized value is denoted by
\[
    W^{\mathrm{std}}=(W-\bar W)/s_W .
\]
Standardization constants are computed from the corresponding observed training file and then applied to validation and test data. Categorical variables are one-hot encoded. A \(K\)-state discrete variable is represented either by an index in \(\{1,\ldots,K\}\) or by the corresponding \(K\)-dimensional one-hot vector.

All neural models are implemented in PyTorch. We parameterize the potential \(A_p(x,y)\) and the potential \(B(x,z)\) as positive multilayer perceptrons with two hidden layers of width 64 and Softplus output activation; the output of \(B(x,z)\) is normalized for identifiability. For continuous Gaussian and synthetic experiments, the input dimensions are \(d_X+d_Y\) for \(A_p\) and \(d_X+d_Z\) for \(B\). For real-data experiments, the same hidden-layer design is used after replacing the input dimension by the corresponding latent, one-hot, or discrete auxiliary representation.

The downstream predictor is denoted by \(f_\psi\). It is an MLP with hidden widths 32 and 16 and ReLU activations. Regression experiments use mean squared error. Binary classification experiments use binary cross-entropy. \textsf{NAIVE} trains \(f_\psi\) with input \(X\). \textsf{SB} and \textsf{ORACLE} train \(f_\psi\) with input \((X,Z)\). The same downstream architecture is used for \textsf{NAIVE}, \textsf{SB}, and \textsf{ORACLE}.

\paragraph{Gaussian experiment.}
\((X,Y,Z)\) is sampled from a zero-mean trivariate Gaussian distribution with unit variances,
\[
    \sigma_{XY}=\sigma_{XZ}=0.5,
    \qquad
    \sigma_{YZ}\in\{0.3,0.5,0.7,0.9\}.
\]
Each setting uses 30,000 samples, split equally into \(D^A\), \(D^B\), and \(D^T\) (10,000 each). The variables \(Y^A\) and \(Z^B\) are standardized using their observed training-file values before bridge learning, and the imputed values are transformed back to the original scale for downstream training and evaluation.

For the Gaussian bridge, we use the empirical alignment cost in standardized coordinates while preserving the raw-scale covariance structure. Specifically, if \(y^{\mathrm{std}}\) and \(z^{\mathrm{std}}\) denote the standardized versions of \(y\) and \(z\), and \(s_Y,s_Z,s_{XY},s_{XZ}\) denote the corresponding empirical standard deviations and covariances, the cost is implemented as
\[
    c(x,y,z)
    =
    \left(
    s_{XY}s_Y y^{\mathrm{std}}
    -
    s_{XZ}s_Z z^{\mathrm{std}}
    \right)^2.
\]
This is the sample-level implementation of the Gaussian alignment cost
\[
    c_{\mathrm{align}}(y,z)
    =
    \left\|
    \mathbb E[X\mid Y=y]-\mathbb E[X\mid Z=z]
    \right\|_2^2
\]
under the trivariate Gaussian model.

\paragraph{Synthetic regression experiments.}
\(X=(X_1,X_2)\sim N(0,I_2)\), and the noise pair \((e_1,e_2)\) has common variance \(\sigma^2_{\mathrm{noise}}\in\{0.1,0.5\}\) and correlation \(\rho_{\mathrm{noise}}\in\{0.3,0.6,0.9\}\). The three data-generating mechanisms are
\[
\begin{array}{ll}
\mathrm{SYN1}:&
Y=X_1+2X_2+e_1,
\qquad
Z=2X_1+X_2+e_2,\\[2mm]
\mathrm{SYN2}:&
Y=\sin(X_1)+X_2^2+e_1,
\qquad
Z=\sin(X_1+X_2)+e_2,\\[2mm]
\mathrm{SYN3}:&
Y=\log(1+e^{X_1})+X_2+e_1,
\qquad
Z=\frac12\log(1+e^{X_1})+X_2^2+e_2.
\end{array}
\]
Each synthetic setting uses 30,000 samples, split equally into \(D^A\), \(D^B\), and \(D^T\).

The cost compares standardized residuals after subtracting the estimated conditional means explained by \(X\). With
\[
    u_Y(y)=\frac{y-\bar Y^A}{s_Y^A},
    \qquad
    u_Z(z)=\frac{z-\bar Z^B}{s_Z^B},
\]
let \(\hat m_Y(x)\) be the fitted conditional mean of \(u_Y(Y)\) given \(X=x\), trained on \(D^A\), and let \(\hat m_Z(x)\) be the fitted conditional mean of \(u_Z(Z)\) given \(X=x\), trained on \(D^B\). The residual cost is
\[
    c(x,y,z)
    =
    \frac{
    \left[
    \{u_Y(y)-\hat m_Y(x)\}
    -
    \{u_Z(z)-\hat m_Z(x)\}
    \right]^2
    }{K_{\mathrm{res}}},
\]
where \(K_{\mathrm{res}}\) is the empirical variance of the numerator over the Monte Carlo triples used for bridge training.

\paragraph{CelebA.}
\(X\) is an 8-dimensional VAE latent embedding, \(Y\) is the binary \texttt{Attractive} attribute, and \(Z\) is the 32-state discrete auxiliary variable induced by the five binary attributes \texttt{Young}, \texttt{High Cheekbones}, \texttt{Arched Eyebrows}, \texttt{Heavy Makeup}, and \texttt{No Beard}. Equivalently, each \(Z\)-state corresponds to one binary vector in \(\{0,1\}^5\), so there are \(2^5=32\) possible auxiliary states. The dataset is split into 40\% for \(D^A\), 40\% for \(D^B\), and 20\% for \(D^T\).

\paragraph{Adult.}
\(Y\) indicates income above \$50K. The common covariates \(X\) consist of age and the categorical variables sex, race, native-country, marital-status, and relationship. The auxiliary variable \(Z\) is built from education-num, hours-per-week, capital-gain, capital-loss, education, occupation, and workclass. Numerical variables are standardized, categorical variables are one-hot encoded, and the resulting auxiliary representation is discretized into \(K=32\) clusters by \(K\)-means. The split uses 16,000 samples for \(D^A\), 16,000 samples for \(D^B\), and the remainder for \(D^T\).

\paragraph{Training hyperparameters.}
Unless otherwise stated, the bridge potentials are trained using Adam with learning rate \(5\times 10^{-4}\). The bridge mini-batch size is 256, and the Monte Carlo sample size for approximating the inner expectations is 256. We use a scale regularization coefficient \(10^{-3}\), loss weights \(w_A=2\) and \(w_B=1\), and gradient clipping with maximum norm 5. For posterior completion, each missing value is sampled from a candidate pool of size 1024, i.e., \(M_Z=1024\) and \(M_Y=1024\), unless the corresponding support is finite and directly enumerated. Imputation is performed in batches of size 2000. For downstream prediction, the downstream MLP is trained using Adam with learning rate \(10^{-3}\), mini-batch size 256, and at most 500 epochs with early stopping patience 30 on a 20\% validation split of the downstream training data.

\paragraph{Completion.}
Each missing \(Z_i^A\) is sampled from a candidate pool $\mathcal C_Z=\{z_1,\ldots,z_{M_Z}\}$
For an observed pair \((x_i^A,y_i^A)\), the unnormalized posterior weight of candidate \(z_k\) is $\omega_{ik}^A
    =
    \exp\left\{
    \log B(x_i^A,z_k)
    -
    \frac{c(x_i^A,y_i^A,z_k)}{\lambda}
    \right\}$, 
and the normalized probability is $p_{ik}^A
    =
    \frac{\omega_{ik}^A}
    {\sum_{\ell=1}^{M_Z}\omega_{i\ell}^A}$. 
The completed value \(\hat Z_i^A\) is sampled from \(\mathcal C_Z\) according to \(\{p_{ik}^A\}_{k=1}^{M_Z}\). For donor-side completion, each missing \(Y_j^B\) is sampled from a candidate pool $\mathcal C_Y=\{y_1,\ldots,y_{M_Y}\}$.  For an observed pair \((x_j^B,z_j^B)\), the unnormalized posterior weight of candidate \(y_k\) is
\[
    \omega_{jk}^B
    =
    \exp\left\{
    \log A_p(x_j^B,y_k)
    +
    \log \hat d(x_j^B,y_k,z_j^B)
    -
    \frac{c(x_j^B,y_k,z_j^B)}{\lambda}
    \right\}.
\]
where $\hat d(x,y,z)=\frac{\hat p(y\mid x)}{\hat p(z\mid x)}$. Note that $\hat p(z|x)$ is a constant.

\paragraph{Density-ratio estimation.}
The conditional densities \(\hat p_{Y\mid X}\) and \(\hat p_{Z\mid X}\) are estimated separately from \(D^A\) and \(D^B\), respectively. For the Gaussian experiment, we assume Gaussian conditionals and plug in their closed-form maximum likelihood estimators. For the synthetic regression experiments, we again assume Gaussian conditionals: the conditional means are estimated by neural networks, and the conditional standard deviations are taken as the corresponding residual MSE. For Adult and CelebA, where \(Y\) and \(Z\) are categorical, we fit neural classifiers in each marginal and use the output softmax probabilities directly.

\paragraph{Pooled completed data.}
The completed recipient and donor files are
\[
    \hat D^A
    =
    \{(X_i^A,Y_i^A,\hat Z_i^A)\}_{i=1}^{n_A},
    \qquad
    \hat D^B
    =
    \{(X_j^B,\hat Y_j^B,Z_j^B)\}_{j=1}^{n_B}.
\]
The pooled completed data are denoted by
\[
    \hat D^T=(X',Y',Z')
    :=
    (X^A\cup X^B,\;Y^A\cup \hat Y^B,\;\hat Z^A\cup Z^B),
\]
on which the downstream predictor \(f_\psi\) is trained. For \(\lambda\), we use the selection procedure described in Appendix~\ref{app:lambda_selection}.

To see further details, refer to the code scripts in Supplementary.

\subsection{Common discrete evaluation protocol}
\label{app:discrete_protocol}

Because \textsf{GO} is defined on a discrete support, Gaussian and synthetic benchmark comparisons are evaluated in a common finite state space. We use the following bin configurations:
\[
    (n_X,n_Y,n_Z)
    \in
    \{(3,3,3),(4,3,4),(5,3,5),(6,3,4),(4,4,4),(4,3,6),(5,4,5),(6,4,6)\}.
\]
For a given configuration, \(n_X\) is the number of bins used for each coordinate of \(X\), \(n_Y\) is the number of bins used for \(Y\), and \(n_Z\) is the number of bins used for \(Z\). If \(X\) is multivariate, each coordinate is binned separately and the resulting vector of bin labels is used as the discrete \(X\)-state.

To avoid leakage, bin boundaries are computed without using \(D^T\). Boundaries for \(X\) are computed from the pooled observed \(X\)-values in \(D^A\cup D^B\). Boundaries for \(Y\) are computed only from the observed \(Y^A\) values in \(D^A\). Boundaries for \(Z\) are computed only from the observed \(Z^B\) values in \(D^B\). The same bin maps are applied to completed values and to evaluation-only true values.

For \textsf{SB} recipient-side reconstruction, posterior probabilities are aggregated at the bin level. Let \(g_Z\) be the bin map for \(Z\). For a recipient-side sample \(i\), the posterior probability assigned to the \(k\)-th \(Z\)-bin is
\[
    \hat p_i^A(k)
    =
    \sum_{\ell:\,g_Z(z_\ell)=k}
    p_{i\ell}^A .
\]
The predicted \(Z\)-bin is
\[
    \hat z^A_i
    =
    \arg\max_{k\in\{1,\ldots,n_Z\}}
    \hat p_i^A(k).
\]
Downstream comparisons use the same binned \(Z\)-representation for every method.

The reported Gaussian and synthetic benchmark tables average over the eight bin configurations and also report the best configuration when applicable. For real-data experiments, no additional binning is applied. CelebA uses the intrinsic 32-state auxiliary variable, where each state is one element of \(\{0,1\}^5\). Thus, a reconstructed \(Z\)-state is counted as correct only when the entire five-dimensional binary attribute vector is recovered correctly. Adult uses the 32-state auxiliary variable obtained by \(K\)-means, and a reconstructed \(Z\)-state is counted as correct when the predicted cluster index matches the true cluster index. Unless otherwise stated, the uncertainty values reported in the main tables are computed across independent random seeds after fixing the evaluation protocol, and do not represent variability across
bin configurations.

\subsection{Evaluation metrics}
\label{app:metrics}

This section defines the metrics used in the figures and tables. For continuous downstream regression in the Gaussian and synthetic experiments, we report the coefficient of determination \(R^2\) and the root mean squared error. Given held-out test samples \(\{(X_i^T,Y_i^T,Z_i^T)\}_{i=1}^{n_T}\) and downstream predictions \(\hat Y_i^T\), the coefficient of determination is
\[
    R^2
    =
    1-
    \frac{
    \sum_{i=1}^{n_T}(Y_i^T-\hat Y_i^T)^2
    }{
    \sum_{i=1}^{n_T}(Y_i^T-\bar Y^T)^2
    },
    \qquad
    \bar Y^T=\frac{1}{n_T}\sum_{i=1}^{n_T}Y_i^T.
\]
The downstream root mean squared error is
\[
    \mathrm{RMSE}_{\mathrm{down}}
    =
    \left\{
    \frac{1}{n_T}
    \sum_{i=1}^{n_T}
    (Y_i^T-\hat Y_i^T)^2
    \right\}^{1/2}.
\]
In Figure~\ref{fig:gaussian_experiments}, the heatmap reports the downstream uplift
\[
    R^2_{\mathrm{SB}}-R^2_{\mathrm{NAIVE}},
\]
and the overlaid curve reports \(\mathrm{RMSE}_{\mathrm{down}}\).

For donor-side continuous imputation, we report the RMSE between the true and imputed \(Y^B\) values:
\[
    \mathrm{RMSE}^{B}
    =
    \left\{
    \frac{1}{n_B}
    \sum_{j=1}^{n_B}
    (Y_{j,\mathrm{true}}^B-\hat Y_j^B)^2
    \right\}^{1/2}.
\]
Here \(Y_{j,\mathrm{true}}^B\) is observed only for evaluation in the fully generated or held-out experimental data, not by the imputation algorithm.

For binary real-data experiments, we report ROC-AUC. The downstream AUC, denoted Down. AUC, is computed from the true test labels \(Y_i^T\in\{0,1\}\) and the predicted scores \(\hat s_i^T\). Equivalently,
\[
    \mathrm{AUC}
    =
    \Pr(\hat s^+>\hat s^-)
    +
    \frac{1}{2}\Pr(\hat s^+=\hat s^-),
\]
where \(\hat s^+\) and \(\hat s^-\) denote the scores of randomly chosen positive and negative test samples, respectively. For donor-side binary imputation, \(\mathrm{AUC}^B\) is computed analogously using the true donor-side labels \(Y_{j,\mathrm{true}}^B\) and the imputed probability scores
\[
    \hat s_j^B
    =
    \Pr(\hat Y_j^B=1\mid X_j^B,Z_j^B).
\]

When we report the fraction of the \textsf{NAIVE}-to-\textsf{ORACLE} gap recovered by \textsf{SB}, we use
\[
    \mathrm{GapClosure}
    =
    \frac{
    M_{\textsf{SB}}-M_{\textsf{NAIVE}}
    }{
    M_{\textsf{ORACLE}}-M_{\textsf{NAIVE}}
    },
\]
where \(M=R^2\) for continuous regression experiments and \(M=\mathrm{AUC}\) for binary classification experiments. This quantity is interpreted only when \(M_{\textsf{ORACLE}}>M_{\textsf{NAIVE}}\).

All reported values are averaged over 10 independent runs with different random seeds, and the uncertainty values denote one sample standard deviation across the runs. Specifically, for metric values \(m_1,\ldots,m_{10}\), we report
\[
    \bar m \pm s_m,
    \qquad
    \bar m=\frac{1}{10}\sum_{r=1}^{10}m_r,
    \qquad
    s_m=
    \left\{
    \frac{1}{9}\sum_{r=1}^{10}(m_r-\bar m)^2
    \right\}^{1/2}.
\]

\subsection{Selecting \texorpdfstring{$\lambda$}{lambda} via auxiliary triplets}
\label{app:lambda_selection}

This section details the validation-based tuning procedure referenced in the main text (Section~\ref{sec:simul}, ``Choice of \(\lambda\)''). We assume access to an auxiliary set of fully observed triplets $C = \{(X_i, Y_i, Z_i)\}_{i=1}^{n_C} \sim P_{X,Y,Z}$, 
disjoint from \(D^A\), \(D^B\), and the held-out test file \(D^T\). The candidate set is $\Lambda
    =
    \{10^{-2},10^{-2+1/3},10^{-2+2/3},\ldots,10^1\}$, 
namely 10 log-spaced values between \(0.01\) and \(10\).

For each \(\lambda \in \Lambda\), the bridge potentials are trained on \((D^A, D^B)\), and the learned posterior is used to construct the pooled imputed dataset $\hat D^T(\lambda) = \hat D^A(\lambda) \cup \hat D^B(\lambda)$.
A downstream predictor \(f_\psi^\lambda\) is trained on \(\hat D^T(\lambda)\) to predict \(Y'\) from \((X', Z')\), and evaluated on \(C\) using \(R^2\) for regression and AUC for binary classification:
\[
    \mathrm{Score}_C(\lambda)
    =
    \begin{cases}
        R^2\!\left(Y_C, f_\psi^\lambda(X_C, Z_C)\right) & \text{(regression)},\\[2pt]
        \mathrm{AUC}\!\left(Y_C, f_\psi^\lambda(X_C, Z_C)\right) & \text{(binary classification)}.
    \end{cases}
\]
The selected value is $\hat\lambda=
    \arg\max_{\lambda\in\Lambda}
    \mathrm{Score}_C(\lambda)$. 
The same selection protocol is applied independently within each random seed, and \(D^T\) is reserved for final evaluation only.

\paragraph{\textsf{SB (Weak)} vs.\ \textsf{SB (Strong)}.}
The two variants differ only in the size of \(C\):
\textsf{SB (Weak)} uses a small set with \(n_C = 50\), reflecting the typical practical regime in statistical matching where only a few fully observed triplets are publicly available;
\textsf{SB (Strong)} uses a large set with \(n_C = |D^A| = |D^B| = 10\)k, serving as an idealized reference. Comparing the two isolates the sensitivity of \(\lambda\)-selection to the size of the auxiliary set.

\subsection{Computational resources}
\label{app:compute_resources}

All experiments were implemented in PyTorch and run on a GPU-equipped workstation. 
The workstation had four NVIDIA L4 GPUs, each with approximately 23GB of memory, an Intel Xeon Gold 6526Y CPU with 32 logical CPUs, and 256GB of system memory. 
The NVIDIA driver version was 560.35.03 and the CUDA version was 12.6. 
Data preprocessing and metric aggregation were performed on CPUs, while bridge training, posterior completion, and downstream predictor training were performed on GPUs. 
A single run, consisting of bridge training, posterior completion, downstream predictor training, and final evaluation for one dataset configuration, random seed, and value of \(\lambda\), took approximately 5 minutes.

\section{Additional Experimental Results}
\label{supp:simul-tables}

This section provides the full MM1--MM5 benchmark results omitted from the main text due to space constraints. 
The results cover the Gaussian, synthetic, and real-data experiments.

\begin{table}[htbp]
\centering
\caption{Gaussian benchmark with all mixed-method variants. Down. $R^2$ is the coefficient of determination for predicting $Y$; $RMSE^B$ is the donor-side imputation error for $\hat Y^B$. Boldface indicates the best-performing mixed-method variant for each metric.}
\label{tab:gaussian_mm_all}
\normalsize
\setlength{\tabcolsep}{5.5pt}
\renewcommand{\arraystretch}{1.10}
\resizebox{\textwidth}{!}{%
\begin{tabular}{ccccccc}
\toprule
$\sigma_{YZ}$
& Metric
& \textsf{MM1}
& \textsf{MM2}
& \textsf{MM3}
& \textsf{MM4}
& \textsf{MM5} \\
\midrule

\multirow{2}{*}{\textbf{0.9}}
& Down. $R^2$
& $0.248 \pm 0.010$
& $0.253 \pm 0.010$
& $0.262 \pm 0.010$
& $0.249 \pm 0.008$
& {\bfseries\boldmath $0.264 \pm 0.009$} \\
& $RMSE^B$
& $1.228 \pm 0.001$
& {\bfseries\boldmath $0.866 \pm 0.008$}
& $1.224 \pm 0.003$
& $1.231 \pm 0.004$
& $1.223 \pm 0.002$ \\

\midrule

\multirow{2}{*}{\textbf{0.7}}
& Down. $R^2$
& $0.258 \pm 0.015$
& $0.255 \pm 0.011$
& {\bfseries\boldmath $0.261 \pm 0.008$}
& $0.253 \pm 0.008$
& $0.258 \pm 0.008$ \\
& $RMSE^B$
& $1.224 \pm 0.006$
& {\bfseries\boldmath $0.866 \pm 0.009$}
& $1.224 \pm 0.003$
& $1.232 \pm 0.005$
& $1.223 \pm 0.002$ \\

\midrule

\multirow{2}{*}{\textbf{0.5}}
& Down. $R^2$
& $0.257 \pm 0.006$
& {\bfseries\boldmath $0.260 \pm 0.007$}
& $0.260 \pm 0.004$
& $0.256 \pm 0.003$
& $0.260 \pm 0.006$ \\
& $RMSE^B$
& $1.220 \pm 0.010$
& {\bfseries\boldmath $0.868 \pm 0.010$}
& $1.226 \pm 0.004$
& $1.233 \pm 0.006$
& $1.224 \pm 0.005$ \\

\midrule

\multirow{2}{*}{\textbf{0.3}}
& Down. $R^2$
& $0.259 \pm 0.005$
& {\bfseries\boldmath $0.260 \pm 0.006$}
& $0.258 \pm 0.005$
& $0.259 \pm 0.005$
& $0.259 \pm 0.006$ \\
& $RMSE^B$
& $1.221 \pm 0.006$
& {\bfseries\boldmath $0.867 \pm 0.009$}
& $1.225 \pm 0.003$
& $1.232 \pm 0.005$
& $1.222 \pm 0.004$ \\

\bottomrule
\end{tabular}
}
\end{table}

\begin{table}[htbp]
\centering
\caption{SYN1 benchmark with all mixed-method variants. Down. $R^2$ is the downstream test $R^2$ for predicting $Y$; $RMSE^B$ is the donor-side imputation error for $\hat Y^B$. Boldface indicates the best-performing mixed-method variant for each metric; in cases of tied means, the variant with the smallest standard deviation is selected.}
\label{tab:syn1_mm_all}
\normalsize
\setlength{\tabcolsep}{4.0pt}
\renewcommand{\arraystretch}{1.10}
\resizebox{\textwidth}{!}{%
\begin{tabular}{cccccccc}
\toprule
$\sigma^2_{\text{noise}}$
& $\rho_{\text{noise}}$
& Metric
& \textsf{MM1}
& \textsf{MM2}
& \textsf{MM3}
& \textsf{MM4}
& \textsf{MM5} \\
\midrule
\multirow{2}{*}{\textbf{0.5}}
& \multirow{2}{*}{\textbf{0.9}}
& Down. $R^2$
& $0.907 \pm 0.002$
& {\bfseries\boldmath $0.908 \pm 0.001$}
& $0.907 \pm 0.002$
& $0.907 \pm 0.001$
& $0.907 \pm 0.002$ \\
&
& $RMSE^B$
& $1.005 \pm 0.006$
& {\bfseries\boldmath $0.706 \pm 0.007$}
& $1.001 \pm 0.006$
& $1.008 \pm 0.009$
& $1.003 \pm 0.005$ \\
\midrule
\multirow{2}{*}{\textbf{0.5}}
& \multirow{2}{*}{\textbf{0.6}}
& Down. $R^2$
& $0.906 \pm 0.001$
& $0.907 \pm 0.001$
& $0.907 \pm 0.001$
& {\bfseries\boldmath $0.907 \pm 0.001$}
& $0.907 \pm 0.001$ \\
&
& $RMSE^B$
& $1.005 \pm 0.009$
& {\bfseries\boldmath $0.705 \pm 0.006$}
& $0.999 \pm 0.007$
& $1.007 \pm 0.011$
& $1.000 \pm 0.004$ \\
\midrule
\multirow{2}{*}{\textbf{0.5}}
& \multirow{2}{*}{\textbf{0.3}}
& Down. $R^2$
& $0.906 \pm 0.001$
& {\bfseries\boldmath $0.907 \pm 0.001$}
& $0.907 \pm 0.001$
& $0.907 \pm 0.001$
& $0.907 \pm 0.001$ \\
&
& $RMSE^B$
& $1.003 \pm 0.008$
& {\bfseries\boldmath $0.704 \pm 0.005$}
& $0.998 \pm 0.008$
& $1.006 \pm 0.012$
& $0.999 \pm 0.005$ \\
\midrule
\multirow{2}{*}{\textbf{0.1}}
& \multirow{2}{*}{\textbf{0.9}}
& Down. $R^2$
& $0.980 \pm 0.000$
& {\bfseries\boldmath $0.980 \pm 0.000$}
& $0.980 \pm 0.000$
& $0.980 \pm 0.000$
& $0.980 \pm 0.000$ \\
&
& $RMSE^B$
& $0.450 \pm 0.003$
& {\bfseries\boldmath $0.316 \pm 0.003$}
& $0.448 \pm 0.003$
& $0.454 \pm 0.005$
& $0.450 \pm 0.002$ \\
\midrule
\multirow{2}{*}{\textbf{0.1}}
& \multirow{2}{*}{\textbf{0.6}}
& Down. $R^2$
& $0.980 \pm 0.000$
& {\bfseries\boldmath $0.980 \pm 0.000$}
& $0.980 \pm 0.000$
& $0.980 \pm 0.000$
& $0.980 \pm 0.000$ \\
&
& $RMSE^B$
& $0.449 \pm 0.003$
& {\bfseries\boldmath $0.315 \pm 0.003$}
& $0.447 \pm 0.003$
& $0.454 \pm 0.006$
& $0.449 \pm 0.001$ \\
\midrule
\multirow{2}{*}{\textbf{0.1}}
& \multirow{2}{*}{\textbf{0.3}}
& Down. $R^2$
& $0.980 \pm 0.000$
& {\bfseries\boldmath $0.980 \pm 0.000$}
& $0.980 \pm 0.000$
& $0.980 \pm 0.000$
& $0.980 \pm 0.000$ \\
&
& $RMSE^B$
& $0.447 \pm 0.003$
& {\bfseries\boldmath $0.315 \pm 0.002$}
& $0.446 \pm 0.003$
& $0.454 \pm 0.007$
& $0.449 \pm 0.002$ \\
\bottomrule
\end{tabular}
}
\end{table}

\begin{table}[htbp]
\centering
\caption{SYN2 benchmark with all mixed-method variants. Down. $R^2$ is the downstream test $R^2$ for predicting $Y$; $RMSE^B$ is the donor-side imputation error for $\hat Y^B$. Boldface indicates the best-performing mixed-method variant for each metric.}
\label{tab:syn2_mm_all}
\normalsize
\setlength{\tabcolsep}{4.0pt}
\renewcommand{\arraystretch}{1.10}
\resizebox{\textwidth}{!}{%
\begin{tabular}{cccccccc}
\toprule
$\sigma^2_{\epsilon}$
& $\rho$
& Metric
& \textsf{MM1}
& \textsf{MM2}
& \textsf{MM3}
& \textsf{MM4}
& \textsf{MM5} \\
\midrule

\multirow{2}{*}{\textbf{0.5}}
& \multirow{2}{*}{\textbf{0.9}}
& Down. $R^2$
& {\bfseries\boldmath $0.657 \pm 0.013$}
& $0.354 \pm 0.026$
& $0.592 \pm 0.025$
& $0.593 \pm 0.010$
& $0.559 \pm 0.010$ \\
&
& $RMSE^B$
& $2.223 \pm 0.009$
& {\bfseries\boldmath $1.588 \pm 0.025$}
& $2.244 \pm 0.013$
& $2.271 \pm 0.017$
& $2.257 \pm 0.028$ \\

\midrule

\multirow{2}{*}{\textbf{0.5}}
& \multirow{2}{*}{\textbf{0.6}}
& Down. $R^2$
& {\bfseries\boldmath $0.641 \pm 0.008$}
& $0.348 \pm 0.032$
& $0.591 \pm 0.010$
& $0.588 \pm 0.021$
& $0.571 \pm 0.030$ \\
&
& $RMSE^B$
& $2.258 \pm 0.009$
& {\bfseries\boldmath $1.585 \pm 0.024$}
& $2.243 \pm 0.014$
& $2.271 \pm 0.014$
& $2.253 \pm 0.027$ \\

\midrule

\multirow{2}{*}{\textbf{0.5}}
& \multirow{2}{*}{\textbf{0.3}}
& Down. $R^2$
& {\bfseries\boldmath $0.647 \pm 0.010$}
& $0.344 \pm 0.027$
& $0.580 \pm 0.012$
& $0.589 \pm 0.015$
& $0.559 \pm 0.024$ \\
&
& $RMSE^B$
& $2.223 \pm 0.040$
& {\bfseries\boldmath $1.584 \pm 0.023$}
& $2.242 \pm 0.014$
& $2.270 \pm 0.013$
& $2.252 \pm 0.026$ \\

\midrule

\multirow{2}{*}{\textbf{0.1}}
& \multirow{2}{*}{\textbf{0.9}}
& Down. $R^2$
& {\bfseries\boldmath $0.759 \pm 0.008$}
& $0.405 \pm 0.046$
& $0.624 \pm 0.024$
& $0.613 \pm 0.015$
& $0.603 \pm 0.028$ \\
&
& $RMSE^B$
& $2.044 \pm 0.018$
& {\bfseries\boldmath $1.457 \pm 0.024$}
& $2.037 \pm 0.017$
& $2.092 \pm 0.021$
& $2.043 \pm 0.031$ \\

\midrule

\multirow{2}{*}{\textbf{0.1}}
& \multirow{2}{*}{\textbf{0.6}}
& Down. $R^2$
& {\bfseries\boldmath $0.757 \pm 0.008$}
& $0.392 \pm 0.059$
& $0.612 \pm 0.020$
& $0.615 \pm 0.010$
& $0.602 \pm 0.027$ \\
&
& $RMSE^B$
& $2.048 \pm 0.057$
& {\bfseries\boldmath $1.456 \pm 0.024$}
& $2.038 \pm 0.018$
& $2.092 \pm 0.020$
& $2.043 \pm 0.032$ \\

\midrule

\multirow{2}{*}{\textbf{0.1}}
& \multirow{2}{*}{\textbf{0.3}}
& Down. $R^2$
& {\bfseries\boldmath $0.756 \pm 0.006$}
& $0.404 \pm 0.053$
& $0.624 \pm 0.013$
& $0.607 \pm 0.015$
& $0.600 \pm 0.016$ \\
&
& $RMSE^B$
& $2.045 \pm 0.055$
& {\bfseries\boldmath $1.455 \pm 0.023$}
& $2.037 \pm 0.017$
& $2.092 \pm 0.019$
& $2.042 \pm 0.032$ \\

\bottomrule
\end{tabular}
}
\end{table}

\begin{table}[htbp]
\centering
\caption{SYN3 benchmark with all mixed-method variants. Down. $R^2$ is the downstream test $R^2$ for predicting $Y$; $RMSE^B$ is the donor-side imputation error for $\hat Y^B$. Boldface indicates the best-performing mixed-method variant for each metric.}
\label{tab:syn3_mm_all}
\normalsize
\setlength{\tabcolsep}{4.0pt}
\renewcommand{\arraystretch}{1.10}
\resizebox{\textwidth}{!}{%
\begin{tabular}{cccccccc}
\toprule
$\sigma^2_{\epsilon}$
& $\rho$
& Metric
& \textsf{MM1}
& \textsf{MM2}
& \textsf{MM3}
& \textsf{MM4}
& \textsf{MM5} \\
\midrule

\multirow{2}{*}{\textbf{0.5}}
& \multirow{2}{*}{\textbf{0.9}}
& Down. $R^2$
& $0.706 \pm 0.006$
& $0.705 \pm 0.003$
& {\bfseries\boldmath $0.716 \pm 0.010$}
& $0.708 \pm 0.006$
& $0.702 \pm 0.006$ \\
&
& $RMSE^B$
& $1.026 \pm 0.003$
& {\bfseries\boldmath $0.721 \pm 0.007$}
& $1.020 \pm 0.006$
& $1.026 \pm 0.008$
& $1.031 \pm 0.007$ \\

\midrule

\multirow{2}{*}{\textbf{0.5}}
& \multirow{2}{*}{\textbf{0.6}}
& Down. $R^2$
& $0.706 \pm 0.004$
& $0.705 \pm 0.003$
& $0.710 \pm 0.004$
& {\bfseries\boldmath $0.710 \pm 0.003$}
& $0.702 \pm 0.003$ \\
&
& $RMSE^B$
& $1.021 \pm 0.007$
& {\bfseries\boldmath $0.719 \pm 0.006$}
& $1.018 \pm 0.008$
& $1.025 \pm 0.010$
& $1.028 \pm 0.007$ \\

\midrule

\multirow{2}{*}{\textbf{0.5}}
& \multirow{2}{*}{\textbf{0.3}}
& Down. $R^2$
& $0.709 \pm 0.004$
& $0.705 \pm 0.003$
& {\bfseries\boldmath $0.710 \pm 0.003$}
& $0.710 \pm 0.004$
& $0.703 \pm 0.002$ \\
&
& $RMSE^B$
& $1.022 \pm 0.010$
& {\bfseries\boldmath $0.718 \pm 0.005$}
& $1.017 \pm 0.009$
& $1.024 \pm 0.011$
& $1.026 \pm 0.006$ \\

\midrule

\multirow{2}{*}{\textbf{0.1}}
& \multirow{2}{*}{\textbf{0.9}}
& Down. $R^2$
& {\bfseries\boldmath $0.919 \pm 0.004$}
& $0.914 \pm 0.001$
& $0.918 \pm 0.002$
& $0.917 \pm 0.002$
& $0.911 \pm 0.003$ \\
&
& $RMSE^B$
& $0.493 \pm 0.002$
& {\bfseries\boldmath $0.348 \pm 0.004$}
& $0.492 \pm 0.003$
& $0.495 \pm 0.006$
& $0.515 \pm 0.006$ \\

\midrule

\multirow{2}{*}{\textbf{0.1}}
& \multirow{2}{*}{\textbf{0.6}}
& Down. $R^2$
& $0.917 \pm 0.002$
& $0.914 \pm 0.002$
& {\bfseries\boldmath $0.918 \pm 0.004$}
& $0.917 \pm 0.002$
& $0.911 \pm 0.003$ \\
&
& $RMSE^B$
& $0.493 \pm 0.004$
& {\bfseries\boldmath $0.347 \pm 0.004$}
& $0.491 \pm 0.004$
& $0.495 \pm 0.007$
& $0.513 \pm 0.006$ \\

\midrule

\multirow{2}{*}{\textbf{0.1}}
& \multirow{2}{*}{\textbf{0.3}}
& Down. $R^2$
& $0.917 \pm 0.001$
& $0.913 \pm 0.001$
& {\bfseries\boldmath $0.918 \pm 0.004$}
& $0.917 \pm 0.002$
& $0.911 \pm 0.003$ \\
&
& $RMSE^B$
& $0.491 \pm 0.006$
& {\bfseries\boldmath $0.346 \pm 0.004$}
& $0.490 \pm 0.005$
& $0.495 \pm 0.007$
& $0.512 \pm 0.006$ \\

\bottomrule
\end{tabular}
}
\end{table}

\begin{table}[htbp]
\centering
\caption{Real-data benchmarks with all mixed-method variants. Down. AUC is the downstream test AUC for predicting $Y$; $AUC^B$ is the donor-side imputation AUC for $\hat Y^B$. Boldface indicates the best-performing mixed-method variant for each metric.}
\label{tab:real_mm_all}
\normalsize
\setlength{\tabcolsep}{4.5pt}
\renewcommand{\arraystretch}{1.10}
\resizebox{\textwidth}{!}{%
\begin{tabular}{ccccccc}
\toprule
Dataset
& Metric
& \textsf{MM1}
& \textsf{MM2}
& \textsf{MM3}
& \textsf{MM4}
& \textsf{MM5} \\
\midrule

\multirow{2}{*}{\textbf{CelebA}}
& Down. AUC
& $0.765 \pm 0.002$
& $0.794 \pm 0.004$
& {\bfseries\boldmath $0.814 \pm 0.007$}
& $0.801 \pm 0.007$
& $0.799 \pm 0.004$ \\
& $AUC^B$
& $0.720 \pm 0.001$
& $0.769 \pm 0.017$
& $0.771 \pm 0.018$
& $0.586 \pm 0.002$
& {\bfseries\boldmath $0.788 \pm 0.006$} \\

\midrule

\multirow{2}{*}{\textbf{Adult}}
& Down. AUC
& $0.838 \pm 0.006$
& $0.711 \pm 0.044$
& {\bfseries\boldmath $0.841 \pm 0.003$}
& $0.802 \pm 0.017$
& $0.779 \pm 0.016$ \\
& $AUC^B$
& {\bfseries\boldmath $0.813 \pm 0.004$}
& $0.677 \pm 0.041$
& $0.693 \pm 0.004$
& $0.570 \pm 0.006$
& $0.716 \pm 0.018$ \\

\bottomrule
\end{tabular}
}
\end{table}

\end{document}